\newtheorem{remark}{Remark}
\newcommand{\x}[1]{\thetav^{(#1)}}
\newcommand{\FL}{\texttt{FL}\xspace}
\newcommand{\FedAvg}{\texttt{FedAvg}\xspace}
\newcommand{\VRed}{\texttt{VRed}\xspace}
\newcommand{\SemiVRed}{\texttt{Semi-VRed}\xspace}
\newcommand{\PropFair}{\texttt{PropFair}\xspace}
\newcommand{\TERM}{\texttt{TERM}\xspace}
\newcommand{\AFL}{\texttt{AFL}\xspace}
\newcommand{\qFFL}{\texttt{q-FFL}\xspace}
\newcommand{\GiFair}{\texttt{GiFair}\xspace}
\newcommand{\Ditto}{\texttt{Ditto}\xspace}
\newcommand{\MV}{\texttt{MV}\xspace}
\newcommand{\MSV}{\texttt{MSV}\xspace}
\newcommand{\CVaR}{\texttt{CVaR}\xspace}
\newcommand{\Sc}{\mathcal{S}}
\definecolor{mydarkred}{rgb}{0,0,0}
\DeclareRobustCommand\onedot{\futurelet\@let@token\@onedot}
\def\@onedot{\ifx\@let@token.\else.\null\fi\xspace}
\newcommand{\thetav     }{\boldsymbol \theta     }
\title{Semi-Variance Reduction for Fair Federated Learning}
\author{
    Saber Malekmohammadi, Yaoliang Yu
    \affiliations
    Cheriton School of Computer Science\\
    University of Waterloo, Canada
    \emails
    saber.malekmohammadi@uwaterloo.ca
}
\begin{document}
\maketitle

\begin{abstract}
    Ensuring fairness in a Federated Learning (\texttt{FL}) system, i.e., a satisfactory performance for all of the participating diverse clients, is an important and challenging problem. There are multiple fair \FL algorithms in the literature, which have been relatively successful in providing fairness. However, these algorithms mostly emphasize on the loss functions of worst-off clients to improve their performance, which often results in the suppression of well-performing ones. As a consequence, they usually sacrifice the system's overall average performance for achieving fairness. Motivated by this and inspired by two well-known risk modeling methods in Finance, \emph{Mean-Variance} and \emph{Mean-Semi-Variance}, we propose and study two new fair \FL algorithms, \emph{Variance Reduction} (\VRed) and \emph{Semi-Variance Reduction} (\SemiVRed). \VRed encourages equality between clients' loss functions by penalizing their variance. In contrast, \SemiVRed penalizes the discrepancy of only the worst-off clients' loss functions from the average loss. Through extensive experiments on multiple vision and language datasets, we show that, \SemiVRed achieves SoTA performance in scenarios with heterogeneous data distributions and improves both fairness and system overall average performance.
\end{abstract}

\section{Introduction}

Federated Learning \cite{McMahanMRHA17} is a framework consisting of some clients and the private data that is distributed among them, allowing training of a shared or personalized model based on the clients' data. Since the seminal work of \cite{McMahanMRHA17}, it has attracted an intensive amount of attention and much progress has been made in its different aspects, including algorithmic innovations \cite{tian_2018_fedprox,ReddiCZG20,pathak2020fedsplit,HuoYGCH20,WangYSPK20,reddi2020adaptive}, fairness \cite{McMahanMRHA17,li2019fair,MohriSS19,li2020tilted,Yue2021GIFAIRFLAA,Zhang2022ProportionalFI}, convergence analysis \cite{Khaled2019FirstAO,xiang_convergence_2019,GorbunovHR21,malek_ecmlpkdd,malekmohammadi2021operator}, personalization \cite{chen2022on,oh2022fedbabu,pmlr-v162-zhang22o,pmlr-v162-bietti22a}, and various other aspects.

Due to heterogeneity in clients' data and their resources, performance fairness is an important challenge in \FL systems. There have been some previous works addressing this problem. For instance, \cite{MohriSS19} proposed Agnostic Federated Learning (\AFL), which aims at minimizing the largest loss function among clients through a minimax optimization framework. Similarly, \cite{li2020tilted} proposed an algorithm called \TERM using tilted losses. \Ditto \cite{li_ditto_2021} is another existing algorithm based on model personalization for clients. Also, $q$-Fair Federated Learning (\qFFL) \cite{li2019fair} is an algorithm inspired by $\alpha$-fairness in wireless networks \cite{lan_axiomatic_2010}. Also, the work in \cite{Zhang2022ProportionalFI} proposed \PropFair based on Proportional Fairness, and showed that all the aforementioned fair \FL algorithms can be unified into a generalized mean framework. \GiFair \cite{Yue2021GIFAIRFLAA} achieves fairness using a different mechanism: it penalizes the discrepancy between clients' loss functions, i.e., encouraging equality of clients' losses. \texttt{FCFL} \cite{Cui2021AddressingAD} uses a constrained version of \AFL to achieve both algorithmic parity and performance consistency in \FL settings.

Being designed for fair \FL, the aforementioned algorithms usually result in the suppression of well-performing clients, due to the lower weights that the algorithms place on them or due to the equality that they enforce between clients' losses (\GiFair). As a consequence, the clients experiencing good performance with vanilla FedAvg, experience a relatively lower performance when using the above fair \FL algorithms. This is our motivation for proposing two new algorithms.

Our inspiration in this paper is a concept in Finance called \emph{risk modeling} used for portfolio selection. There are two vastly used methodologies for risk modeling: \emph{Mean-Variance} (\MV) \cite{MVsurvey,Soleimani2009MarkowitzbasedPS,meanvar_mark} and its expansion: \emph{Mean-Semi-Variance} (\MSV) \cite{msv_based_farmework,PO_downsiderisk,MSVEF,meansemivar_mark}, which are used for quantifying investment return and investment risk. Motivated by the vast usage of these methodologies and their great success in financial planning, we bring the \MV and \MSV methods to \FL by proposing \emph{Variance Reduction} (\VRed) and \emph{Semi-Variance Reduction} (\SemiVRed) algorithms, respectively. 
\section{Background}\label{sec:background}
With formal notations, we consider an \FL setting with $n$ clients for the task of multi-class classification. Let $x\in \mathcal{X}\subseteq\mathds{R}^p$ and $y \in \mathcal{Y}=\left\{1, \ldots, C \right\}$ denote the input data point and its target label, respectively. Each client $i$ has its own private data with data distribution $P_i(x,y)$. Let $h: \mathcal{X}\times\thetav\to\mathds{R}^C$ be the used predictor function, which is parameterized by $\thetav\in\mathds{R}^d$, shared among all clients. Also, let $\ell:\mathds{R}^C\times\mathcal{Y}\to \mathds{R}_+$ be the loss function, which we choose to be the cross entropy loss. Client $i$ minimizes loss function $f_i(\thetav)=\mathbb{E}_{(x,y)\sim P_i(x,y)}[\ell(h(x,\thetav), y)]$, which has minimum value $f_i^*$, on its local dataset $\mathcal{D}_i$ with size $n_i$. We denote $\frac{n_i}{N}$, where $N = \sum_i n_i$, with $p_i$.

There are various fair \FL algorithms in the literature. In \cref{tab:fairness_fl}, we have provided details of the most recent algorithms with their formulations. The existing fair \FL algorithms can be grouped into two main categories:

\begin{table*}[t]
    \caption{Objective functions  of the existing fair \FL algorithms (assuming $N_i=N_j$ for $i \neq j$).
    }
    \centering
    \begin{tabular}{ccc}
    \toprule 
  \bf FL algorithm & \bf Objective & \bf Reference \\
    \midrule
  \FedAvg  & $\sum_i f_i(\thetav)$  & \cite{McMahanMRHA17} \\
  \AFL  & $\max_i f_i (\thetav)$ & \cite{MohriSS19} \\
 \qFFL  & $\sum_i f_i^{q+1} (\thetav)$ & \cite{li2019fair} \\
  \TERM & $ \sum_i e^{\alpha f_i(\thetav)}$ & \cite{li2020tilted} \\
 \PropFair  & $-\sum_i \log (M - f_i(\thetav))$ & \cite{Zhang2022ProportionalFI}\\
 $\Delta$-\FL  & $\CVaR_{1-\alpha}$~ $(f_1(\thetav), \ldots,f_n(\thetav))$ & \cite{deltaFL}\\
 \midrule
 \GiFair  & $\sum_i f_i(\thetav) + \lambda \sum_{i\neq j} |f_i(\thetav)-f_j(\thetav)| $ & \cite{Yue2021GIFAIRFLAA}\\
 \VRed  & $\sum_i f_i(\thetav) + \beta \sum_{i} \left(f_i(\thetav) - \frac{1}{n} \sum_j f_j(\thetav)\right)^2 $ & this work\\
 
 \SemiVRed  & $\sum_i f_i(\thetav) + \beta \sum_{i} \left(f_i(\thetav) - \frac{1}{n} \sum_j f_j(\thetav)\right)_+^2 $ & this work\\

    \bottomrule
    \end{tabular}
    \label{tab:fairness_fl}
\end{table*}

\subsection{Algorithms based on generalized mean}
The first category includes \FedAvg \cite{McMahanMRHA17},
\qFFL \cite{li2019fair}, \AFL \cite{MohriSS19}, \TERM \cite{li2020tilted}, \PropFair \cite{Zhang2022ProportionalFI}. It was shown by \cite{Zhang2022ProportionalFI} that this category can be unified into a generalized mean framework \cite{Kolmogorov30}, where more attention is paid to the clients with larger losses. Also, there has been a risk measure in Finance literature, which models the one-sided nature of risks and is known as ``Conditional Value at Risk" (\CVaR). It was used by \cite{fairnessriskmeasures} for algorithmic fairness in a non-\FL setting. The work in \cite{deltaFL} used the same \CVaR risk measure as an objective function to propose $\Delta$-\FL algorithm for achieving performance fairness in \FL settings. Besides not being clear how to set the parameter $\alpha$ of the \CVaR-based objective function in \FL settings (\cref{tab:fairness_fl}), it reduces to the average of loss functions of a subset of clients with the largest loss values for $0<\alpha<1$ (see eq. 24 in \cite{fairnessriskmeasures}), i.e., assigns a zero weight to the rest of clients in the average. Hence, \CVaR's objective function is a generalization of the objective function of \AFL.

\subsection{Algorithms based on enforcing equality}
The second category of fair \FL algorithms, which includes \GiFair, is based on encouraging equality between clients' train loss values. \GiFair adds a regularization term to the objective of \FedAvg to penalize the pairwise discrepancy between clients' loss values (see \cref{tab:fairness_fl}), and enforces equality between them to achieve performance fairness.

A common feature of both the categories above is their emphasis on the clients with relatively larger losses, which usually  results in the suppression of the well-performing clients. This may result in the degradation of the overall average performance too (measured by the mean test accuracy across clients). In the next sections, we will see that \SemiVRed can achieve fairness by adding a regularization term: \emph{semi-variance} of clients' loss functions and improves both fairness \emph{and} the system overall performance simultaneously. There have been some works in the literature in a similar context of ``variance" regularization: \cite{Maurer2009,Namkoong2017} proposed regularizing the empirical risk minimization (ERM) by the empirical variance of losses across training samples to balance bias and variance and improve out-of-sample (test) performance and convergence rate. Similarly, \cite{shivaswamy2010a} proposed boosting binary classifiers based on a variance penalty applied to exponential loss. Variance regularization has also been used for out-of-distribution (domain) generalization: assuming having access to data from multiple training domains,  \cite{Krueger2021OutofDistributionGV} proposed penalizing variance of training risks across the domains as a method of distributionally robust optimization for domain generalization.

\section{Risk modeling methods in Finance: \emph{Mean-Variance} and \emph{Mean-Semi-Variance}}
\label{sec:risk_modeling}

\emph{Mean-Variance} (\MV) and \emph{Mean-Semi-Variance} (\MSV) have been two popular methods for modeling risks and gains of an investment portfolio, as the first step in financial planning.

    \textbf{\emph{Mean-Variance} (\MV)} \cite{MVsurvey,Soleimani2009MarkowitzbasedPS,meanvar_mark}.
    This method treats the return of each security in an investment portfolio as a random variable and adopts its expected value and variance to quantify the return and risk of the portfolio, respectively. An investor either minimizes the risk for a fixed expected return level or maximizes the return for a given acceptable risk level. In the former case, \MV results in the following problem:
    \begin{align}\label{eq:MV}
    &\max_{x_1, \ldots, x_n} \quad \mathbb{E}[x_1 S_1  + \ldots + x_n S_n]\\\nonumber
    & \textrm{s.t.} \quad \sigma^2[x_1 S_1  + \ldots + x_n S_n]\leq R,\quad \sum_i x_i=1,\quad x_i\geq0.
    \end{align}
    Here, $\mathbb{E}$ and $\sigma^2$ denote the expected value and variance operators, respectively. Also, $S_i$ and $x_i$ denote the random return from security $i$ and the proportion of total wealth invested in security $i$, respectively. This example provides a basic view of how \MV model works. Other closely related measures of risk in the \MV model include the standard deviation ($\sigma$) and coefficient of variation ($\sigma/\mathbb{E}$). 
    However, the \emph{Mean-Variance} modeling of risk is debatable: any uncertain return above the expectation is usually not considered as risk in the common sense, but the \MV model does so. This shortcoming is resolved by the \emph{Mean-Semi-Variance} model.
    
    \textbf{\emph{Mean-Semi-Variance} (\MSV)} \cite{msv_based_farmework,PO_downsiderisk,MSVEF,meansemivar_mark}. Having recognized the importance of the (often) one-sided nature of risks, \MSV model proposed a \emph{downside} risk measure known as \emph{semi-variance}, which we denote by $\sigma_<^2$. Unlike variance, it is only concerned with the downside of the return, i.e., only the cases that the return drops \emph{below} a predefined threshold. With this risk modeling method, problem \ref{eq:MV} changes to the following:
    \begin{align}\label{eq:MSV}
    & \max_{x_1, \ldots, x_n} \quad  \mathbb{E}[x_1 S_1  + \ldots + x_n S_n]\\ \nonumber
    & \textrm{s.t.} \quad \sigma_<^2[x_1 S_1  + \ldots + x_n S_n]\leq R, \quad \sum_i x_i=1, \quad x_i\geq0,
    \end{align}
    where the operator \emph{semi-variance} ($\sigma_<^2$) measures the \emph{downsides} of the return: $\sigma_<^2[z]=\mathbb{E}[(\mathbb{E}[z]-z)_+^2]$. \MSV is a preferable alternative for the \MV model as its modeling of the risk is more consistent with our perception from an investment risk. Again, the problem above gives a basic understanding of how the \MSV model works. More complex variations of \MV and \MSV models have been developed for complex and unpredictable financial markets \cite{MSV_MAP,MVsurvey,MSVEF}.


\section{\MV and \MSV models for fair \FL}\label{sec:MV_MSV_FL}

We now propose two fair \FL algorithms by using the \MV and \MSV to quantify the inequality between clients' utilities. Inspired by \cite{Zhang2022ProportionalFI}, we take a simple approach and define $u_i(\thetav) = M-f_i(\thetav)$ as the utility of client $i$, where $M$ can be treated as a utility baseline. The smaller the loss function of a client, the larger its utility: \emph{the utility of a client can be used to roughly represent the test accuracy of the shared model, parameterized by $\thetav$, on its local data}. With this definition, we propose to model the inequality between clients by the variance and semi-variance of their utilities, resulting in the \VRed and \SemiVRed algorithms. 

\subsection{The \VRed algorithm}
\VRed models the inequality between clients' utilities with their variance and aims to minimize the following:
\begin{align}\label{eq:v-red}
&F_{\VRed}(\thetav) = \sum_i p_i f_i(\thetav) +  \beta \sum_{i} p_i\bigg(u_i(\thetav) -  \sum_j p_j u_j(\thetav)\bigg)^2 \nonumber \\
&= \sum_i p_i f_i(\thetav) +  \beta \sum_{i} p_i\bigg(f_i(\thetav) - \sum_j p_j f_j(\thetav)\bigg)^2.
\end{align}

\begin{algorithm}[tb]
\caption{\VRed and \SemiVRed}
\label{alg:Var_Min}
\KwIn{global epoch $T$, loss functions $f_i$, number of samples $n_i$ for client $i$, number of total samples $N$, initial global model $\thetav^{(0)}$, local step number $K_i$ for client $i$, learning rate $\eta$}

Let $p_i = \frac{n_i}{N}$ for $i \in \{0, 1, \dots, n-1\}$ 

\For{$t=0, 1, \dots, T$}
{randomly select $\Sc^{(t)} \subseteq [n]$ 

$\x{t}_{i} = \thetav^{(t)}$ for $i\in \Sc^{(t)}$, $N = \sum_{i\in \Sc^{(t)}} n_i$ 

    \For(\tcp*[h]{in parallel}){$i$ in $\Sc^{(t)}$}
    {starting from $\theta_i^{(t)}$, take $K_i$ local SGD steps on $f_i(\theta_i^{(t)})$ with learning rate $\eta$ to find $\theta_i^{(t+1)}$ 

    compute $\Delta^{(t)}_i =  \theta_i^{(t)} - \theta_i^{(t+1)}$
    }
compute $\overline{f}({\thetav^{(t)}})=\sum_i p_i f_i({\thetav^{(t)}})$ and $\overline{\Delta}^{(t)} = \sum_i p_i \Delta_i^{(t)}$

\uIf {\VRed}
{compute $\Delta^{(t)} = \sum_i p_i \Delta^{(t)}_{i} + 2\beta \sum_{i} p_i(f_i({\thetav^{(t)}}) - \overline{f}({\thetav^{(t)}}))(\Delta^{(t)}_{i} - \overline{\Delta}^{(t)})$ }
\uElseIf{\SemiVRed}
{compute $\Delta^{(t)} = \sum_i p_i \Delta^{(t)}_{i} 
    + 2\beta \sum_{i} p_i(f_i({\thetav^{(t)}}) - \overline{f}({\thetav^{(t)}}))_+(\Delta^{(t)}_{i} - \overline{\Delta}^{(t)})$ }

update $\thetav^{(t+1)} = \thetav^{(t)} - \Delta^{(t)}$
}
\KwOut{global model $\thetav^{(T)}$}
\end{algorithm}

\VRed regularizes the objective function of vanilla \FedAvg with variance of clients losses (utilities). Let us derive the \VRed federated learning algorithm. By taking the gradient of \eqref{eq:v-red} and multiplying it by the step size $\eta$, we have:
\begin{align}
& \eta \nabla F_{\VRed}(\thetav) = \sum_i p_i \eta \nabla f_i(\thetav) +
\nonumber \\
& 2\beta \sum_{i} p_i\Big(f_i(\thetav) -  \sum_j p_j f_j(\thetav)\Big)\Big(\eta \nabla f_i(\thetav) -  \sum_j p_j \eta \nabla f_j(\thetav)\Big).
\end{align}
This equation immediately leads to an \FL algorithm, by replacing the gradient $\eta \nabla f_i(\thetav)$ with the pseudo-gradient (i.e., the opposite of the local update), denoted by $\Delta^{(t)}_{i}$:

\begin{align}\label{eq:grad-V-Red}
& \eta \nabla F_{\VRed}(\thetav) \nonumber \\
& = \sum_i p_i \Delta^{(t)}_{i} + 2\beta \sum_{i} p_i\Big(f_i(\thetav) - \overline{f}(\thetav)\Big)\Big(\Delta^{(t)}_{i} -   \overline{\Delta}^{(t)}\Big),
\end{align}
where $\overline{f}(\thetav)=\sum_i p_i f_i(\thetav)$ and $\overline{\Delta}^{(t)} = \sum_i p_i \Delta_i^{(t)}$. The corresponding algorithm is included in \cref{alg:Var_Min}. There is a parameter $\beta$ which tunes the effect of the regularization term, which needs to get tuned for better performance. Note that this is a new aggregation rule: instead of simply averaging the local models, it has an additional second term, which relates to the variance of clients losses. If all clients are identical (i.e., no heterogeneity across clients), this term vanishes.

\subsubsection{An interpretation of \VRed}
With the definition of utilities above ($u_i(\thetav) = M-f_i(\thetav)$), \VRed is aimed to penalize the variance of clients' utilities. As such, one potential drawback of \VRed is that it may result in the suppression of well-performing clients (the ones with small losses) for reducing the variance, which is the same drawback that \GiFair \cite{Yue2021GIFAIRFLAA} had. Hence, the final model's overall performance averaged across clients may get sacrificed. Despite this, \GiFair minimizes an upper bound of \VRed objective function: assuming $p_i = \frac{1}{n}  ~(i=1, \ldots, n)$, i.e., all clients have the same number of data points, we have the following upper bound on \VRed objective function (see \Cref{eq:relate_Gi_Ve_proof} in the appendix for derivation):

\begin{align}\label{eq:relate_Gi_Ve}
F_{\VRed}(\thetav) & = \sum_i f_i(\thetav) + \beta \sum_{i} \bigg|f_i(\thetav) - \frac{1}{n} \sum_j f_j(\thetav)\bigg|^2 \nonumber \\
& \leq \sum_i f_i(\thetav) + \frac{2\beta}{n} \sum_{j \neq i} \big| f_i(\thetav) - f_j(\thetav)\big|^2 \nonumber \\
& \leq \sum_i f_i(\thetav) + \frac{2\beta}{n} \sum_{j \neq i} \big| f_i(\thetav) - f_j(\thetav)\big| \nonumber \\
& = F_{\GiFair}(\thetav),
\end{align}
where the second inequality is true when clients' loss functions are all less than 1, which happens after some communication rounds. Therefore, \GiFair in fact minimizes an upper bound of \VRed's objective function. This relation between the two algorithms can explain why \VRed usually outperforms \GiFair in terms of fairness in our experiments.

In typical \FL settings, the global objective function can be written as a weighted sum of clients' loss functions, i.e., $F(\thetav) := \sum_{i=1}^n w_i h_i(\thetav)$, where $h_i(\thetav)$ is used by client $i$ as a surrogate of the global objective and is optimized using the client's local data. Also, the weight $w_i$ represents the importance of client $i$ loss function in the global objective function $F(\thetav)$. For example, \FedAvg simply uses $h_i(\thetav)=f_i(\thetav)$ and $w_i=p_i$ ($p_i=\frac{n_i}{N}$, see \cref{alg:Var_Min}) and \qFFL uses $h_i(\thetav)=f_i^{q+1}(\thetav)$ and $w_i=p_i$. A direct consequence of the above summation form for $F(\thetav)$ is:
\begin{align}\label{eq:global_loss_grad}
    \nabla F(\thetav) = \sum_{i=1}^n w_i \nabla h_i(\thetav).
\end{align}
Again, the weight $w_i$ represents the importance of the client $i$'s model updates. In \cref{lemma:grad_vred}, we show that the gradient of the global objective of \VRed in \eqref{eq:v-red}, can be written in the form of \eqref{eq:global_loss_grad}. For simplicity and easier interpretation, we assume $p_i = \frac{1}{n}  ~(i=1, \ldots, n)$, i.e., all clients have the same number of data points.

\begin{restatable}{lemma}{gradvred}\label{lemma:grad_vred}
Assuming equal dataset sizes for all clients, for any model parameter $\thetav$, the gradient of the global objective $F_{\VRed}(\thetav)$ defined in \eqref{eq:v-red} can be expressed as 
\begin{align}\label{eq:lin_sum_vred}
    &\nabla F_{\VRed}(\thetav) = \sum_{i=1}^n w_i(\thetav) \nabla f_i(\thetav), \nonumber \\ &w_i(\thetav)=\frac{1}{n}+\frac{2\beta(f_i(\thetav)-\overline{f}(\thetav))}{n}, \nonumber \\
    &\overline{f}(\thetav) = \frac{\sum_i f_i(\thetav)}{n}.
\end{align}

\end{restatable}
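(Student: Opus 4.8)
The plan is to differentiate $F_{\VRed}(\thetav)$ directly from its definition in \eqref{eq:v-red}, specialized to $p_i = \frac{1}{n}$, and then collect terms. Writing $\overline{f}(\thetav) = \frac{1}{n}\sum_j f_j(\thetav)$, the objective reads $F_{\VRed}(\thetav) = \frac{1}{n}\sum_i f_i(\thetav) + \frac{\beta}{n}\sum_i \big(f_i(\thetav) - \overline{f}(\thetav)\big)^2$. The gradient of the linear first term is immediately $\frac{1}{n}\sum_i \nabla f_i(\thetav)$, which already supplies the $\frac{1}{n}$ part of the claimed weights $w_i(\thetav)$.

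For the quadratic term I would apply the chain rule to each summand. Setting $g_i(\thetav) := f_i(\thetav) - \overline{f}(\thetav)$, we have $\nabla g_i(\thetav) = \nabla f_i(\thetav) - \frac{1}{n}\sum_j \nabla f_j(\thetav)$, so $\nabla\big[\frac{\beta}{n}\sum_i g_i(\thetav)^2\big] = \frac{2\beta}{n}\sum_i g_i(\thetav)\big(\nabla f_i(\thetav) - \nabla\overline{f}(\thetav)\big)$, with $\nabla\overline{f}(\thetav) = \frac{1}{n}\sum_j \nabla f_j(\thetav)$. The key step is then to split this into $\frac{2\beta}{n}\sum_i g_i(\thetav)\nabla f_i(\thetav)$ minus $\frac{2\beta}{n}\big(\sum_i g_i(\thetav)\big)\nabla\overline{f}(\thetav)$, and to observe that the second piece vanishes identically because $\sum_i g_i(\thetav) = \sum_i f_i(\thetav) - n\,\overline{f}(\thetav) = 0$ by definition of the mean. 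Adding the surviving term to the gradient of the linear part and factoring $\nabla f_i(\thetav)$ out of each index $i$ yields $\nabla F_{\VRed}(\thetav) = \sum_i \big(\frac{1}{n} + \frac{2\beta(f_i(\thetav)-\overline{f}(\thetav))}{n}\big)\nabla f_i(\thetav)$, which is exactly \eqref{eq:lin_sum_vred}.

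I do not expect a genuine obstacle: the statement is a clean differentiation identity, valid wherever the $f_i$ are differentiable, and everything beyond the chain rule is bookkeeping. The only point that is not purely mechanical — and the one I would flag explicitly in the write-up — is the cancellation of the cross term via the centering identity $\sum_i (f_i(\thetav) - \overline{f}(\thetav)) = 0$; overlooking it would leave a spurious $\nabla\overline{f}(\thetav)$ contribution and obscure the linear-combination form.
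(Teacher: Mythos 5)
Your proof is correct and follows essentially the same route as the paper's: direct differentiation of the objective with $p_i=\tfrac{1}{n}$, chain rule on the squared deviations, and elimination of the $\nabla\overline{f}(\thetav)$ cross term via the centering identity $\sum_i\bigl(f_i(\thetav)-\overline{f}(\thetav)\bigr)=0$. The only cosmetic difference is that the paper works with $n\nabla F(\thetav)$ and divides by $n$ at the end, whereas you carry the $\tfrac{1}{n}$ factors throughout.
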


The proof is deferred to \S \ref{sec:appendix_proofs} in the appendix. \cref{lemma:grad_vred} shows that, unlike \FedAvg that would assign $w_i = \frac{1}{n},  i=1, \ldots, n$ to all clients, \VRed assigns a relatively larger weight ($w_i$) to clients with larger loss functions, and dynamically updates the weights $w_i$ at each communication round. Importantly, based on \eqref{eq:lin_sum_vred}, in order for all clients to get assigned a positive weight, the parameter $\beta$ needs to satisfy the following inequality: $0 \leq \beta < \beta_{\VRed}^{max}(\thetav) \triangleq \frac{1}{2(\overline{f}(\thetav)- \min_i \{f_i(\thetav)\})}.$

\subsection{The \SemiVRed algorithm}
Inspired by the discussion on the superiority of \MSV over \MV in \S~\ref{sec:risk_modeling} for risk modeling, we propose an extension of \VRed. Consider the following objective function instead of \eqref{eq:v-red}:
\begin{align}\label{eq:semivar-red}
&F_{\texttt{SVRed}}(\thetav) = \sum_i p_i f_i(\thetav) +  \beta \sum_{i} p_i\bigg(f_i(\thetav) - \sum_j p_j f_j(\thetav)\bigg)_+^2,
\end{align}

where $\sigma^2_<$ denotes the semi-variance of clients' utilities. This objective, in addition to minimizing the mean loss, \emph{reduces the semi-variance of clients' losses}, meaning that only those clients that have a relatively small utility $u_i(\thetav)$ (or equivalently a large loss $f_i(\thetav)$) contribute to the semi-variance  regularization term in  \cref{eq:semivar-red}. Similar to what we did for \VRed, if we take the gradient of \eqref{eq:semivar-red}, we have:
\begin{align}\label{eq:grad-semivred}
& \eta \nabla F_{\texttt{SVRed}}(\thetav) \nonumber \\
&= \sum_i p_i \Delta^{(t)}_{i} + 2\beta \sum_{i} p_i\Big(f_i(\thetav) - \overline{f}(\thetav)\Big)_+\Big(\Delta^{(t)}_{i} -  \overline{\Delta}^{(t)}\Big),
\end{align}
where $\Delta^{(t)}_{i}$ is the pseudo-gradient (i.e., the opposite of the local update) of user $i$.
The corresponding algorithm is included in \cref{alg:Var_Min}. Again, we have a tunable parameter $\beta$ which sets the effect of the regularization term and needs to get tuned for better performance.

\subsection{Can we interpret what \SemiVRed does?}
\subsubsection{Optimization aspect} 
We will show in \cref{lemma:grad_semivred} that, in contrast to \VRed (and \GiFair) and thanks to its more efficient regularization, \SemiVRed does not suppress the well-performing clients to help the worst-off ones. Again, for simplicity and easier interpretation, we assume equal dataset sizes for all clients, which leads to $p_i = \frac{1}{n} ~ (i=1, \ldots, n$).

\begin{restatable}{lemma}{gradsemivred}\label{lemma:grad_semivred}
In each communication round between the clients and the server, let $>_C$ denote the set of clients whose local loss function is greater than the average loss function $\overline{f}(\thetav)$. Assuming equal dataset sizes for all clients, for any model parameter $\thetav$, the gradient of the global objective $F_{\texttt{SVRed}}(\thetav)$ defined in \eqref{eq:semivar-red} can be expressed as
\begin{align}\label{eq:lin_sum_semivred}
    \nabla F_{\texttt{SVRed}}(\thetav) = \sum_{i=1}^n w_i(\thetav) \nabla f_i(\thetav),
\end{align}

where $\overline{f}(\thetav) = \frac{\sum_i f_i(\thetav)}{n}$ and: 

\scriptsize
\begin{equation}\label{eq:semivred_weights}
    w_i(\thetav)  =
\begin{dcases}
    \frac{1}{n}+\frac{2\beta(f_i(\thetav)-\overline{f}(\thetav))}{n} - \frac{2\beta\sum_{j\in >_{\mathcal{C}
}}(f_j(\thetav)-\overline{f}(\thetav))}{n^2} \text{, if $i \in >_{\mathcal{C}}$ } \\
 \frac{1}{n}-\frac{2\beta\sum_{j\in >_{\mathcal{C}
}}(f_j(\thetav)-\overline{f}(\thetav))}{n^2} \text{, if $i \notin >_{\mathcal{C}}$ }
\end{dcases}
\end{equation}

\end{restatable}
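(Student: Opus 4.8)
The plan is to differentiate $F_{\texttt{SVRed}}$ in \eqref{eq:semivar-red} directly under the standing assumption $p_i=\frac1n$, and then regroup the resulting terms client-by-client. The one point that needs a word of justification first is smoothness: the map $t\mapsto (t)_+^2$ is continuously differentiable on $\RR$ with derivative $t\mapsto 2(t)_+$ (the left and right derivatives both vanish at $t=0$), so the chain rule applies to $\big(f_i(\thetav)-\overline f(\thetav)\big)_+^2$ with no nonsmoothness obstruction, even at parameters where some $f_i(\thetav)=\overline f(\thetav)$.

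Next I would carry out the differentiation. Writing $\overline f(\thetav)=\frac1n\sum_j f_j(\thetav)$, we have $\nabla\overline f(\thetav)=\frac1n\sum_j\nabla f_j(\thetav)$, hence $\nabla\big(f_i(\thetav)-\overline f(\thetav)\big)=\nabla f_i(\thetav)-\frac1n\sum_j\nabla f_j(\thetav)$, and therefore
\begin{align*}
\nabla F_{\texttt{SVRed}}(\thetav)
&= \frac1n\sum_i \nabla f_i(\thetav) \\
&\quad +\frac{2\beta}{n}\sum_i \big(f_i(\thetav)-\overline f(\thetav)\big)_+\Big(\nabla f_i(\thetav)-\frac1n\sum_j\nabla f_j(\thetav)\Big).
\end{align*}
(This is just \eqref{eq:grad-semivred} with the idealized single–local–step substitution $\Delta_i^{(t)}=\eta\nabla f_i(\thetav)$, divided by $\eta$.) Splitting the bracket gives one piece $\frac{2\beta}{n}\sum_i\big(f_i(\thetav)-\overline f(\thetav)\big)_+\nabla f_i(\thetav)$ and one piece $-\frac{2\beta}{n^2}\big(\sum_i(f_i(\thetav)-\overline f(\thetav))_+\big)\sum_j\nabla f_j(\thetav)$. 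Now observe $\big(f_i(\thetav)-\overline f(\thetav)\big)_+$ equals $f_i(\thetav)-\overline f(\thetav)$ when $i\in{>_{\mathcal C}}$ and $0$ otherwise, so the first piece becomes a sum over ${>_{\mathcal C}}$, and the scalar $\sum_i(f_i(\thetav)-\overline f(\thetav))_+$ equals $\sum_{j\in{>_{\mathcal C}}}(f_j(\thetav)-\overline f(\thetav))$.

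Finally I would read off the coefficient of each $\nabla f_i(\thetav)$. Every index $i$ receives $\frac1n$ from the first term and $-\frac{2\beta}{n^2}\sum_{j\in{>_{\mathcal C}}}(f_j(\thetav)-\overline f(\thetav))$ from the last term; an index $i\in{>_{\mathcal C}}$ receives in addition $\frac{2\beta}{n}(f_i(\thetav)-\overline f(\thetav))$. Collecting these is exactly \eqref{eq:semivred_weights}, which establishes the representation \eqref{eq:lin_sum_semivred}. The argument is structurally identical to the proof of \Cref{lemma:grad_vred}; the only substantive difference is that the truncation $(\cdot)_+$ confines the ``extra'' weight $\frac{2\beta}{n}(f_i(\thetav)-\overline f(\thetav))$ to clients in ${>_{\mathcal C}}$, while all clients still share the common correction term produced by differentiating $\overline f$. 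I expect the only mildly delicate step to be the $C^1$ bookkeeping for the truncated square together with pulling the constant $\sum_i(f_i(\thetav)-\overline f(\thetav))_+$ out of the second sum; the rest is linear algebra.
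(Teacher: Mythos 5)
Your proposal is correct and follows essentially the same route as the paper's proof: differentiate \eqref{eq:semivar-red} directly under $p_i=\frac{1}{n}$, split off the gradient of $\overline{f}(\thetav)$ as a common correction term shared by all clients, and collect the coefficient of each $\nabla f_i(\thetav)$ according to whether $i \in >_{\mathcal{C}}$. The only addition is your explicit check that $t \mapsto (t)_+^2$ is $C^1$ with derivative $2(t)_+$, which the paper takes for granted; it is a worthwhile clarification but does not change the argument.
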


Similar to \VRed, there is an upper-bound for $\beta$ to ensure positive weights for all clients in \eqref{eq:semivred_weights}: \\
$0 \leq \beta < \beta_{\texttt{SVRed}}^{max}(\thetav) \triangleq \frac{n}{2\sum_{j\in >_{\mathcal{C}
}}(f_j(\thetav)-\overline{f}(\thetav))} $.

\begin{remark}{}\label{remark:comparison} Interesting points can be observed by comparing \cref{lemma:grad_vred} and \cref{lemma:grad_semivred}. First, both of the algorithms pay more attention to worst-off clients by assigning larger weights to their gradients. However, \SemiVRed assigns relatively larger weights to the well-performing clients. Also, for \VRed, $w_i(\thetav)=\frac{1}{n}+\frac{2\beta(f_i(\thetav)-\overline{f}(\thetav))}{n}$. So the better a client performs, the more it is suppressed by the algorithm. In contrast \SemiVRed assigns weights to well-performing clients depending on how bad the worst-off clients perform compared to the average performance. As the performance of worst-off clients improves gradually, the algorithm also lets the well-performing ones for further improvement (instead of strictly suppressing them like \VRed).
\end{remark}

\subsubsection{Scenarios with large label shifts}
We now provide another interesting interpretation of \SemiVRed, related to data heterogeneity in \FL. In order for an easier interpretability and understanding, we assume $P_i(x,y)=P_i(x|y)P_i(y)=P(x|y)P_i(y)$. This means that the class conditional distribution of input $x$ is identical for all clients, and there is no concept shift across them. However, there is label shift across clients. Having made this assumption, we define $\overline{\ell}_j(\thetav)=\mathbb{E}_{x\sim P(x|y=j)}[\ell(h(x,\thetav), j)]$ as the average loss of predictor $h$ on class $j$. Using this notation, we have \cref{lemma:loss_label_shift} about the objective function \eqref{eq:semivar-red} of \SemiVRed.

\begin{restatable}{lemma}{losslabelshift}\label{lemma:loss_label_shift}
Assuming $P_i(x,y)=P_i(x|y)P_i(y)=P(x|y)P_i(y)$ for $i \in \{1, \ldots, n\}$, for any parameter $\thetav$, \SemiVRed global objective \eqref{eq:semivar-red} can be expressed as 
\begin{align}\label{eq:labelshift_sum_semivred}
    &F_{\texttt{SVRed}}(\thetav) \nonumber \\
    & = \sum_{j=1}^C \overline{P}(j)\overline{\ell}_j(\thetav) + \frac{\beta}{n} \sum_{i=1}^n \Big(\sum_{j=1}^C [P_i(j) - \overline{P}(j)]\overline{\ell}_j(\thetav)\Big)_+^2, 
\end{align}
where $\overline{P}(j) = \frac{\sum_{i=1}^n P_i(j)}{n}$ is the marginal distribution of class $j$ in the global dataset. 
\end{restatable}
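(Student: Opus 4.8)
The plan is to reduce the entire identity to a few applications of linearity of expectation that exploit the shared class-conditional assumption $P_i(x\mid y)=P(x\mid y)$. First I would rewrite each client's loss by conditioning on the label: since $P_i(x,y)=P(x\mid y)P_i(y)$,
\[
f_i(\thetav)=\mathbb{E}_{(x,y)\sim P_i}[\ell(h(x,\thetav),y)]=\sum_{j=1}^C P_i(j)\,\mathbb{E}_{x\sim P(x\mid y=j)}[\ell(h(x,\thetav),j)]=\sum_{j=1}^C P_i(j)\,\overline{\ell}_j(\thetav).
\]
The crucial point — and the only place the assumption is used — is that $\overline{\ell}_j(\thetav)$ does not depend on $i$, so each $f_i$ is the \emph{same} linear functional of the vector $(\overline{\ell}_1(\thetav),\dots,\overline{\ell}_C(\thetav))$, with coefficients given by client $i$'s label marginal $P_i(\cdot)$.

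Next I would average over clients. Using equal dataset sizes (so $p_i=\tfrac1n$) and swapping the two finite sums,
\[
\overline{f}(\thetav)=\frac1n\sum_{i=1}^n f_i(\thetav)=\sum_{j=1}^C\Big(\frac1n\sum_{i=1}^n P_i(j)\Big)\overline{\ell}_j(\thetav)=\sum_{j=1}^C \overline{P}(j)\,\overline{\ell}_j(\thetav),
\]
which already gives the first (mean-loss) term of the claimed identity, because $\sum_i p_i f_i(\thetav)=\overline{f}(\thetav)$. For the regularization term I would subtract the two expansions, obtaining $f_i(\thetav)-\overline{f}(\thetav)=\sum_{j=1}^C[P_i(j)-\overline{P}(j)]\,\overline{\ell}_j(\thetav)$; plugging this into $\beta\sum_i p_i\big(f_i(\thetav)-\overline{f}(\thetav)\big)_+^2$ and again using $p_i=\tfrac1n$ yields exactly $\frac{\beta}{n}\sum_{i=1}^n\big(\sum_{j=1}^C[P_i(j)-\overline{P}(j)]\overline{\ell}_j(\thetav)\big)_+^2$, completing the derivation.

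As for difficulty, there is no real obstacle here: the argument is three applications of linearity of expectation plus one sum swap. The only points requiring care are (i) making explicit that the shared-$P(x\mid y)$ assumption is precisely what allows $\overline{\ell}_j$ to be pulled out as an $i$-independent quantity — without it the decomposition still holds but with client-dependent $\overline{\ell}_{i,j}$, and the stated identity fails; and (ii) keeping the $p_i=\tfrac1n$ normalization consistent with the definitions of $\overline{P}(j)$ and the $\tfrac{\beta}{n}$ prefactor appearing in the statement. The $(\cdot)_+$ operator needs no special handling, since it is applied only after the scalar $f_i(\thetav)-\overline{f}(\thetav)$ has already been rewritten.
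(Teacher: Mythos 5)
Your proposal is correct and follows essentially the same route as the paper's own proof: expand each $f_i(\thetav)$ as $\sum_j P_i(j)\overline{\ell}_j(\thetav)$ using the shared class-conditional assumption, derive $\overline{f}(\thetav)=\sum_j \overline{P}(j)\overline{\ell}_j(\thetav)$ by swapping sums, and substitute both into the \SemiVRed objective. Your remarks on where the assumption is actually used and on the harmlessness of the $(\cdot)_+$ operator are accurate and, if anything, slightly more explicit than the paper's derivation.
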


\begin{proof}
From \eqref{eq:semivar-red} and with $p_i=\frac{1}{n}$, we have:

\begin{align}\label{eq:fedavg_simplified}
\overline{f}(\thetav) &= \sum_{i=1}^n \frac{f_i(\thetav)}{n} = \frac{1}{n}\sum_{i=1}^n \Big[\mathbb{E}_{(x,y)\sim p_i(x,y)}[\ell(h(x,\thetav), y)]\Big]  \nonumber  \\
&= \frac{1}{n}\sum_{i=1}^n \Big[\sum_{j=1}^C p_i(j) \times \mathbb{E}_{(x,y)\sim p(x|y=j)}[\ell(h(x,\thetav), j)]\Big]
\nonumber  \\
&= \frac{1}{n}\sum_{i=1}^n \Big[\sum_{j=1}^C p_i(j) \overline{\ell}_j(\thetav)]\Big] = \sum_{j=1}^C \Big[ (\frac{\sum_{i=1}^n p_i(j)}{n}) \overline{\ell}_j(\thetav)\Big] 
\nonumber  \\
&= \sum_{j=1}^C \overline{p}(j)\overline{\ell}_j(\thetav).
\end{align}
Similarly, we can rewrite the clients' local loss functions, and we get to:
\begin{align}
f_i(\thetav) = \sum_{j=1}^C p_i(j)\overline{\ell}_j(\thetav).
\end{align}
By plugging in the above equivalences for $f_i(\thetav)$ and $\overline{f}(\thetav)$ into \eqref{eq:semivar-red}, we get to \eqref{eq:labelshift_sum_semivred}.
\end{proof}

Note that $P_i(j)$ and $\overline{P}(j)$ show the ratio of class $j$ in client $i$'s local dataset and the global dataset, respectively. Based on \eqref{eq:labelshift_sum_semivred}, \SemiVRed is capable of improving the performance of the predictor $h$ in extreme class imbalance scenarios: consider when a label $j$ is over-represented in a client $i$'s data (i.e., $P_i(j)\approx 1$) and under-represented in the global data (i.e., $\overline{P}(j)\approx 0$). In that case, the regularization term up-weights the class $j$ in the global objective function, hence improving the performance of client $i$, which was mostly holding samples with label $j$. For better understanding of this, lets see \cref{exmp:semivred_label_shift} in the following, which we have borrowed from \cite{CLIMB2022}, and shows that \SemiVRed can handle scenarios with large class imbalance efficiently.

\begin{restatable}{example}{semivred_label_shift}\label{exmp:semivred_label_shift}
Let $u$ be the uniform distribution over the existing $C$ classes. Also, let $\delta_c$ be the Dirac distribution of class $c$. Also, lets assume that $C=2$ (binary classification problem). For the $n$ existing clients, we have: 

\begin{equation}\label{eq:exmp_dist}
    p_i(y)  =
\begin{dcases}
    \alpha u + (1-\alpha)\delta_1 \text{~~~~if $i=1$ } \\
    \alpha u + (1-\alpha)\delta_2\text{~~~~if $i \in \left\{2, \ldots, n\right\}$ }
\end{dcases}
\end{equation}

Accordingly, we have:

\begin{equation}\label{eq:exmp_p1}
    p_i(1)  =
\begin{dcases}
    1-\frac{\alpha}{2} \text{~~~~if $i=1$ } \\
    \frac{\alpha}{2} \text{~~~~~~~~~~~if $i \in \left\{2, \ldots, n\right\}$ } 
\end{dcases}
\end{equation}

\begin{equation}\label{eq:exmp_p2}
    p_i(2)  =
\begin{dcases}
    \frac{\alpha}{2} \text{~~~~~~~~~~~if $i=1$ } \\
    1-\frac{\alpha}{2} \text{~~~~if $i \in \left\{2, \ldots, n\right\}$ } 
\end{dcases}
\end{equation}

Therefore, we have:

\begin{equation}\label{eq:exmp_losses}
    f_i(\thetav)  =
\begin{dcases}
    (1-\frac{\alpha}{2})\overline{\ell}_1(\thetav) + \frac{\alpha}{2}\overline{\ell}_2(\thetav) \text{~~~~if $i=1$ } \\
    \frac{\alpha}{2}\overline{\ell}_1(\thetav) + (1-\frac{\alpha}{2})\overline{\ell}_2(\thetav) \text{~~~~if $i \in \left\{2, \ldots, n\right\}$ } 
\end{dcases}
\end{equation}

Hence, according to \cref{eq:fedavg_simplified}, we can rewrite the objective function of \FedAvg as:
\begin{equation}\label{eq:exmp_meanloss}
    \overline{f}(\thetav) = \Big(\frac{\alpha}{2} + \frac{1-\alpha}{n}\Big)\overline{\ell}_1(\thetav) + \Big(\frac{\alpha}{2}+\frac{(1-\alpha)(n-1)}{n}\Big)\overline{\ell}_2(\thetav)
\end{equation}

Clearly, we can see that if $\alpha\approx0$ and $n$ is large, 
then $\overline{\ell}_1(\thetav)$, which is the loss over the minority data will have a small weight, which leads to $\overline{\ell}_1(\thetav)$ being larger than $\overline{\ell}_2(\thetav)$ and poor performance on the minority class $1$, and client $i=1$. Now, if we rewrite \SemiVRed objective function from \cref{eq:labelshift_sum_semivred}, we have:

\begin{align}\label{eq:exmp_semivred_loss}
    & F_{\texttt{SVRed}}(\thetav) \nonumber \\
    &= \Big(\frac{\alpha}{2} + \frac{1-\alpha}{n}\Big)\overline{\ell}_1(\thetav) + \Big(\frac{\alpha}{2}+\frac{(1-\alpha)(n-1)}{n}\Big)\overline{\ell}_2(\thetav) \nonumber \\
    & + \frac{\beta(n-1)^2(1-\alpha)^2}{n^3}\Big(\overline{\ell}_1(\thetav)-\overline{\ell}_2(\thetav)\Big)^2 \nonumber \\
    & = \Bar{f}(\thetav) + \frac{\beta(n-1)^2(1-\alpha)^2}{n^3}\Big(\overline{\ell}_1(\thetav)-\overline{\ell}_2(\thetav)\Big)^2.
\end{align}

The extra regularization term improves $\overline{\ell}_1(\thetav)$ compared to vanilla \FedAvg,. Hence, the performance of client $1$ and consequently, fairness in the system improves.
\end{restatable}

\begin{table*}[t]
\centering
\caption{Comparison between the performance of different algorithms on CIFAR-100. \textbf{Second column:} the percentage (\%) of suffering clients with improved test accuracy compared to \FedAvg. The value in parentheses shows the amount of test accuracy improvement averaged over suffering clients. \textbf{Third column:} the percentage (\%) of well-performing clients with degraded test accuracy compared to \FedAvg. The value in parentheses shows the amount of test accuracy improvement averaged over well-performing clients. \textbf{Fourth column:} the amount of improvement in the overall mean test accuracy compared to \FedAvg.}
\label{tab:comparison_vred_semivred}
\small
\setlength\tabcolsep{2pt}
\begin{tabular}{ccccc}
\toprule
\bf{Algorithm} & \bf{Improved suffering clients} & \bf{Degraded well-performing clients} & \bf{Overall accuracy improvement} 
\\ 
\midrule
\qFFL & 52.08$_{\pm 11.95}$ (+0.69)&  54.21$_{\pm 15.38}$ (-0.79) &  +0.04$_{\pm 0.41}$
\\

\AFL & 51.18$_{\pm 9.26}$ (+0.56) &  77.25$_{\pm 12.85}$ (-3.50)
&  -1.22$_{\pm 0.95}$
\\

\GiFair & 60.55$_{\pm 6.17}$ (+0.86)&  68.22$_{\pm 16.13}$ (-2.03) &  -0.40$_{\pm 0.61}$
\\

\TERM & 23.66$_{\pm 5.34}$ (-1.12) & 87.93$_{\pm 1.81}$ (-3.57) &  -2.20$_{\pm 0.66}$
\\

\PropFair & 8.33$_{\pm 1.69}$ (-4.05)&  92.41$_{\pm 4.28}$ (-6.74) &  -5.23$_{\pm 0.96}$
\\
$\Delta$-\FL & 46.55$_{\pm 6.75}$ (+0.14)&  78.75$_{\pm 2.32}$ (-4.31) &  -1.81$_{\pm 0.25}$
\\
\midrule
\VRed & 60.50$_{\pm 12.52}$ (+1.11)&  60.62$_{\pm 7.53}$ (-0.94) &  +0.21$_{\pm 0.06}$
\\

\SemiVRed & \bf 65.40$_{\pm 6.29}$ \textbf{(+1.47)}&  \bf 53.17$_{\pm 6.75}$ (-0.42) &  \bf+0.64$_{\pm 0.30}$
\\
\bottomrule
\end{tabular}
\vspace{-1em}
\end{table*}

\pgfplotstableread[row sep=\\, col sep=&]
{alg & mean & worst20 \\
\rotatebox[origin=c]{90}{\scriptsize FedAvg} & 43.45 & 18.86 \\
\rotatebox[origin=c]{90}{\scriptsize $q$-FFL} & 45.46 & 21.23 \\
\rotatebox[origin=c]{90}{\scriptsize AFL} & 0 & 0 \\
\rotatebox[origin=c]{90}{\scriptsize GiFair} & 45.05 & 22.65 \\
\rotatebox[origin=c]{90}{\scriptsize TERM} & 45.61 & 24.89 \\
\rotatebox[origin=c]{90}{\scriptsize PropFair} & 36.95 & 12.49 \\
\rotatebox[origin=c]{90}{\scriptsize $\Delta$-FL} & 40.32 & 16.94 \\
\rotatebox[origin=c]{90}{\scriptsize VRed} & 44.43 & 24.28 \\
\rotatebox[origin=c]{90}{\scriptsize Semi-VRed} & 45.47 & 27.08 \\
}\cifartendata

\pgfplotstableread[row sep=\\, col sep=&]
{alg & mean & worst20 \\
\rotatebox[origin=c]{90}{\scriptsize FedAvg} & 20.20 & 11.07 \\
\rotatebox[origin=c]{90}{\scriptsize $q$-FFL} & 20.25 & 11.09 \\
\rotatebox[origin=c]{90}{\scriptsize AFL} & 18.98 & 11.31 \\
\rotatebox[origin=c]{90}{\scriptsize GiFair} & 19.81 & 11.19 \\
\rotatebox[origin=c]{90}{\scriptsize TERM} & 18.00 & 10.02 \\
\rotatebox[origin=c]{90}{\scriptsize PropFair} & 14.97 & 7.00 \\
\rotatebox[origin=c]{90}{\scriptsize $\Delta$-FL} & 18.39 & 10.06 \\
\rotatebox[origin=c]{90}{\scriptsize VRed} & 20.42 & 11.21 \\
\rotatebox[origin=c]{90}{\scriptsize Semi-VRed} & 20.85 & 11.86 \\
}\cifarhundreddata

\pgfplotstableread[row sep=\\, col sep=&]
{alg & mean & worst20 \\
\rotatebox[origin=c]{90}{\scriptsize FedAvg} & 86.26 & 56.87 \\
\rotatebox[origin=c]{90}{\scriptsize $q$-FFL} & 86.63 & 57.77 \\
\rotatebox[origin=c]{90}{\scriptsize AFL} & 86.45 & 57.58 \\
\rotatebox[origin=c]{90}{\scriptsize GiFair} & 86.28 & 56.97 \\
\rotatebox[origin=c]{90}{\scriptsize TERM} & 86.34 & 57.21 \\
\rotatebox[origin=c]{90}{\scriptsize PropFair} & 86.01 & 56.53 \\
\rotatebox[origin=c]{90}{\scriptsize $\Delta$-FL} & 86.11 & 57.10 \\
\rotatebox[origin=c]{90}{\scriptsize VRed} & 85.79 & 57.66 \\
\rotatebox[origin=c]{90}{\scriptsize Semi-VRed} & 85.83 & 58.00 \\
}\cinictendata

\pgfplotstableread[row sep=\\, col sep=&]
{alg & mean & worst20 \\
\rotatebox[origin=c]{90}{\scriptsize FedAvg} & 40.34 & 27.12 \\
\rotatebox[origin=c]{90}{\scriptsize $q$-FFL} & 37.79 & 24.12 \\
\rotatebox[origin=c]{90}{\scriptsize AFL} & 0 & 0 \\
\rotatebox[origin=c]{90}{\scriptsize GiFair} & 40.34 & 27.12 \\
\rotatebox[origin=c]{90}{\scriptsize TERM} & 40.34 & 27.10 \\
\rotatebox[origin=c]{90}{\scriptsize PropFair} & 41.76 &28.75 \\
\rotatebox[origin=c]{90}{\scriptsize $\Delta$-FL} & 39.94 & 26.94 \\
\rotatebox[origin=c]{90}{\scriptsize VRed} & 42.90 & 30.39 \\
\rotatebox[origin=c]{90}{\scriptsize Semi-VRed} & 42.90 & 30.34 \\
}\stackoverflowdata

\begin{figure*}[hbt!]

\subfigure{
\begin{tikzpicture}
\begin{axis}
[title= \scriptsize CIFAR-10,
width  = 0.45\textwidth,
height = 4.2cm,
bar width=5pt,
ymajorgrids = true,
ylabel={\scriptsize test accuracy},
legend style={
at={(0.985,0.74)},
anchor=south east,
column sep=1ex
},
ymin=10, 
ymax=60,
ybar,
symbolic x coords = {\rotatebox[origin=c]{90}{\scriptsize FedAvg}, \rotatebox[origin=c]{90}{\scriptsize $q$-FFL}, \rotatebox[origin=c]{90}{\scriptsize AFL}, \rotatebox[origin=c]{90}{\scriptsize GiFair}, \rotatebox[origin=c]{90}{\scriptsize TERM}, \rotatebox[origin=c]{90}{\scriptsize PropFair}, \rotatebox[origin=c]{90}{\scriptsize $\Delta$-FL}, \rotatebox[origin=c]{90}{\scriptsize VRed}, \rotatebox[origin=c]{90}{\scriptsize Semi-VRed}},
xtick=data,
]

\addplot table[x=alg, y=mean]{\cifartendata};
\addplot table[x=alg, y=worst20]{\cifartendata};
\addplot[blue,sharp plot,dashed]
coordinates {(\rotatebox[origin=c]{90}{\scriptsize FedAvg},45.61) (\rotatebox[origin=c]{90}{\scriptsize Semi-VRed},45.61)};
\addplot[red,sharp plot,dashed]
coordinates {(\rotatebox[origin=c]{90}{\scriptsize FedAvg},27.08) (\rotatebox[origin=c]{90}{\scriptsize Semi-VRed},27.08)};
\legend{\scriptsize mean test accuracy, \scriptsize worst 10\% test accuracy}
\end{axis}

\end{tikzpicture}
}
\hfill
\subfigure{
\begin{tikzpicture}
\begin{axis}
[title= \scriptsize CIFAR-100,
width  = 0.45\textwidth,
height = 4.2cm,
bar width=5pt,
ymajorgrids = true,
ymin=5, 
ybar,
symbolic x coords = {\rotatebox[origin=c]{90}{\scriptsize FedAvg}, \rotatebox[origin=c]{90}{\scriptsize $q$-FFL}, \rotatebox[origin=c]{90}{\scriptsize AFL}, \rotatebox[origin=c]{90}{\scriptsize GiFair}, \rotatebox[origin=c]{90}{\scriptsize TERM}, \rotatebox[origin=c]{90}{\scriptsize PropFair}, \rotatebox[origin=c]{90}{\scriptsize $\Delta$-FL}, \rotatebox[origin=c]{90}{\scriptsize VRed}, \rotatebox[origin=c]{90}{\scriptsize Semi-VRed}},
xtick=data,
]

\addplot table[x=alg, y=mean]{\cifarhundreddata};
\addplot table[x=alg, y=worst20]{\cifarhundreddata};
\addplot[blue,sharp plot,dashed]
coordinates {(\rotatebox[origin=c]{90}{\scriptsize FedAvg},20.85) (\rotatebox[origin=c]{90}{\scriptsize Semi-VRed},20.85)};
\addplot[red,sharp plot,dashed]
coordinates {(\rotatebox[origin=c]{90}{\scriptsize FedAvg},11.86) (\rotatebox[origin=c]{90}{\scriptsize Semi-VRed},11.86)};
\end{axis}
\end{tikzpicture}
}

\subfigure{
\begin{tikzpicture}
\begin{axis}
[title= \scriptsize CINIC-10,
width  = 0.45\textwidth,
height = 4.2cm,
bar width=5pt,
ymajorgrids = true,
ylabel={\scriptsize test accuracy},
legend style={
at={(0.985,0.78)},
anchor=south east,
column sep=1ex
},
ymin=50, 
ybar,
symbolic x coords = {\rotatebox[origin=c]{90}{\scriptsize FedAvg}, \rotatebox[origin=c]{90}{\scriptsize $q$-FFL}, \rotatebox[origin=c]{90}{\scriptsize AFL}, \rotatebox[origin=c]{90}{\scriptsize GiFair}, \rotatebox[origin=c]{90}{\scriptsize TERM}, \rotatebox[origin=c]{90}{\scriptsize PropFair}, \rotatebox[origin=c]{90}{\scriptsize $\Delta$-FL}, \rotatebox[origin=c]{90}{\scriptsize VRed}, \rotatebox[origin=c]{90}{\scriptsize Semi-VRed}},
xtick=data,
]

\addplot table[x=alg, y=mean]{\cinictendata};
\addplot table[x=alg, y=worst20]{\cinictendata};
\addplot[blue,sharp plot,dashed]
coordinates {(\rotatebox[origin=c]{90}{\scriptsize FedAvg},86.63) (\rotatebox[origin=c]{90}{\scriptsize Semi-VRed},86.63)};
\addplot[red,sharp plot,dashed]
coordinates {(\rotatebox[origin=c]{90}{\scriptsize FedAvg},58.00) (\rotatebox[origin=c]{90}{\scriptsize Semi-VRed},58.00)};
\end{axis}
\end{tikzpicture}
}
\hfill
\subfigure{
\begin{tikzpicture}
\begin{axis}
[title= \scriptsize StackOverflow,
width  = 0.45\textwidth,
height = 4.2cm,
bar width=5pt,
ymajorgrids = true,
legend style={
at={(0.985,0.78)},
anchor=south east,
column sep=1ex
},
ymin=20, 
ybar,
symbolic x coords = {\rotatebox[origin=c]{90}{\scriptsize FedAvg}, \rotatebox[origin=c]{90}{\scriptsize $q$-FFL}, \rotatebox[origin=c]{90}{\scriptsize AFL}, \rotatebox[origin=c]{90}{\scriptsize GiFair}, \rotatebox[origin=c]{90}{\scriptsize TERM}, \rotatebox[origin=c]{90}{\scriptsize PropFair}, \rotatebox[origin=c]{90}{\scriptsize $\Delta$-FL}, \rotatebox[origin=c]{90}{\scriptsize VRed}, \rotatebox[origin=c]{90}{\scriptsize Semi-VRed}},
xtick=data,
]

\addplot table[x=alg, y=mean]{\stackoverflowdata};
\addplot table[x=alg, y=worst20]{\stackoverflowdata};
\addplot[blue,sharp plot,dashed]
coordinates {(\rotatebox[origin=c]{90}{\scriptsize FedAvg},42.90) (\rotatebox[origin=c]{90}{\scriptsize Semi-VRed},42.90)};
\addplot[red,sharp plot,dashed]
coordinates {(\rotatebox[origin=c]{90}{\scriptsize FedAvg},30.34) (\rotatebox[origin=c]{90}{\scriptsize Semi-VRed},30.34)};
\end{axis}
\end{tikzpicture}
}

\vspace{-1em}\caption{Average and worst 10\% test accuracies. \textbf{top left:} CIFAR-10, \textbf{top right:} CIFAR-100, \textbf{bottom left:} CINIC-10, \textbf{bottom right:} StackOverflow. Due to divergence on highly heterogeneous data, results for \AFL on CIFAR-10 and StackOverFlow are not shown. All subfigures share the same legends and axis labels.}
\label{fig:comparison}
\vspace{-1em}
\end{figure*}
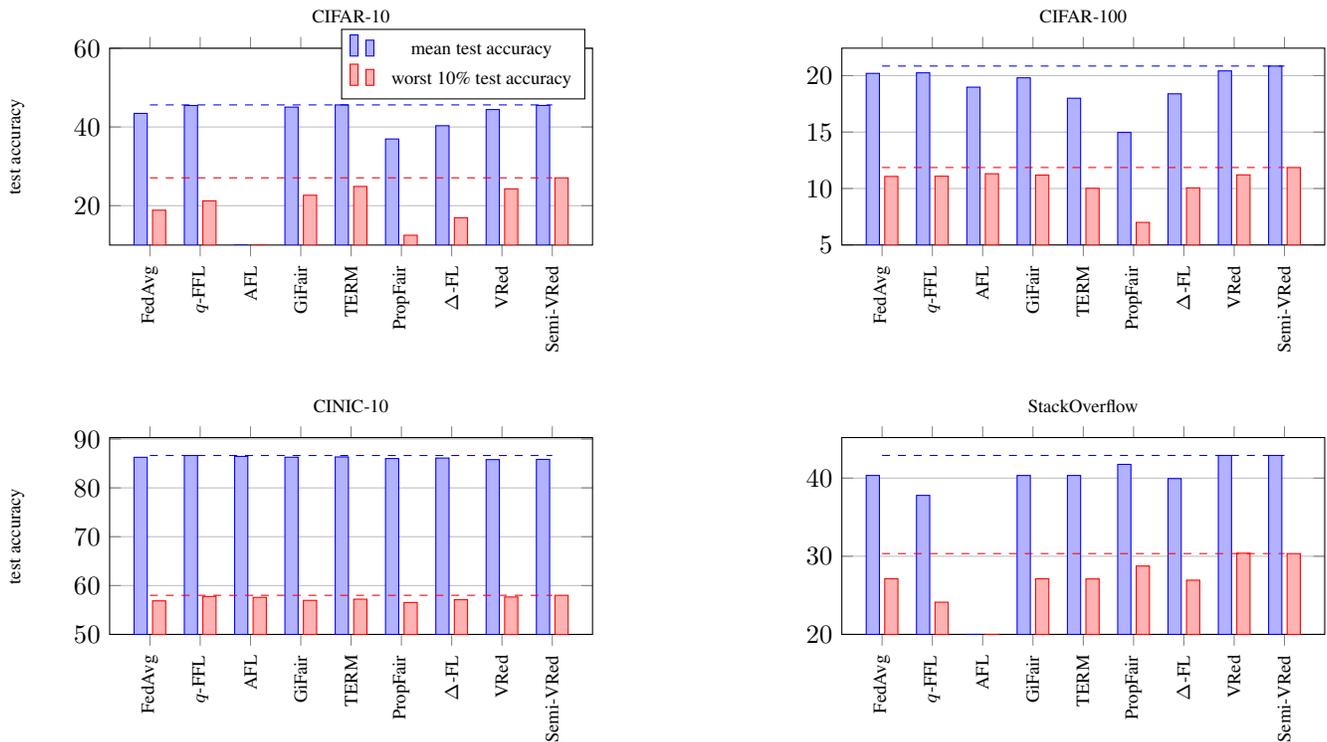

Furthermore, we have explained the relation between \VRed/ \SemiVRed and distributionally robust optimization (DRO) in \cref{vr_and_dro} to provide a better understanding of these algorithms.

\section{Experiments}\label{sec:exps}
In this section, we evaluate our proposed fair \FL algorithms with the existing algorithms in the literature.

\subsection{Experimental setup}
In the following, we explain the datasets, models and their hyperparameters as well as the metrics we use to evaluate our algorithms. For further details, see \S \ref{sec:appendix_setup} in the appendix.

\textbf{Datasets} We use four benchmark datasets existing in the literature. The datasets we use include: CIFAR-10/100 \cite{krizhevsky2009learning}, CINIC-10 \cite{darlow2018cinic} (tasks of image classification) and StackOverflow (task of next word prediction). In order to split the data among clients with high heterogeneity, we use Dirichlet distribution \cite{wang_federated_2019} with a small parameter. StackOverflow has a default realistic partition for each client. We follow the same default data distribution in our experiments.

\textbf{Models, optimizers and loss functions}
For the CIFAR-10/100 and CINIC-10 datasets, we use ResNet-18 \cite{he2016deep}. 
For the language dataset (StackOverflow), we use LSTMs \cite{hochreiter1997long}. In order to optimize the models parameters, we use SGD for minimizing average cross entropy loss. For further details, see \S \ref{sec:appendix_setup} in the appendix.

\textbf{Baseline algorithms}
We compare our \VRed and \SemiVRed algorithms with various fair \FL algorithm existing in the literature including: \FedAvg \cite{McMahanMRHA17}, \AFL \cite{MohriSS19}, \qFFL \cite{li2019fair}
, \PropFair \cite{Zhang2022ProportionalFI}, \TERM \cite{li2020tilted}, \GiFair \cite{Yue2021GIFAIRFLAA} and $\Delta$-\FL \cite{deltaFL}, which are all from the two categories of the fair \FL algorithms mentioned before.

\color{black}
\textbf{Other hyperparameters}
We implement an \FL setting where different clients participate in all communication rounds with one local epoch at each round. We use 200 communication rounds for all algorithms on the datasets to ensure their complete convergence. For CIFAR-10/100 and CINIC-10, we partition the data into 50 clients and for language datasets, we partition the data into 20 clients.

\textbf{Evaluation metrics}
the goal of proposing our novel algorithms was to achieve fairness without compromising the system overall average performance. We measure the overall performance with the \emph{mean test accuracy} across clients. In order to measure the fairness in the system, we use the worst 10\% test accuracies among clients, which is a standard metric for fairness in \FL \cite{li2020tilted,li2019fair}. In the appendix, we also use other common metrics in the literature for measuring fairness, e.g. the standard deviation of test accuracies (see \cref{tab:detailed_results} in appendix \ref{sec:appendix_setup}).



\pgfplotstableread[row sep=\\, col sep=&]
{alg & worst10 \\
\rotatebox[origin=c]{90}{FedAvg} & 23.77 \\
\rotatebox[origin=c]{90}{$q$-FFL} & 25.95 \\
\rotatebox[origin=c]{90}{AFL} & 0 \\
\rotatebox[origin=c]{90}{GiFair} & 26.52 \\
\rotatebox[origin=c]{90}{TERM} & 29.34 \\
\rotatebox[origin=c]{90}{PropFair} & 16.66 \\
\rotatebox[origin=c]{90}{$\Delta$-\FL} & 21.31\\
\rotatebox[origin=c]{90}{VRed} & 27.46 \\
\rotatebox[origin=c]{90}{Semi-VRed} & 30.34 \\
}\cifartenworsttendata

\pgfplotstableread[row sep=\\, col sep=&]
{alg & worst10 \\
\rotatebox[origin=c]{90}{FedAvg} & 12.49 \\
\rotatebox[origin=c]{90}{$q$-FFL} & 12.52 \\
\rotatebox[origin=c]{90}{AFL} & 12.72 \\
\rotatebox[origin=c]{90}{GiFair} & 12.59 \\
\rotatebox[origin=c]{90}{TERM} & 11.04 \\
\rotatebox[origin=c]{90}{PropFair} & 8.06 \\
\rotatebox[origin=c]{90}{$\Delta$-\FL} & 11.28\\
\rotatebox[origin=c]{90}{VRed} & 12.81 \\
\rotatebox[origin=c]{90}{Semi-VRed} & 13.46 \\
}\cifarhundredworsttendata

\pgfplotstableread[row sep=\\, col sep=&]
{alg & worst10 \\
\rotatebox[origin=c]{90}{FedAvg} & 62.78 \\
\rotatebox[origin=c]{90}{$q$-FFL} & 63.62 \\
\rotatebox[origin=c]{90}{AFL} & 63.04 \\
\rotatebox[origin=c]{90}{GiFair} & 62.74 \\
\rotatebox[origin=c]{90}{TERM} & 62.98 \\
\rotatebox[origin=c]{90}{PropFair} & 62.27 \\
\rotatebox[origin=c]{90}{$\Delta$-\FL} & 62.45\\
\rotatebox[origin=c]{90}{VRed} & 62.75 \\
\rotatebox[origin=c]{90}{Semi-VRed} & 62.70 \\
}\cinictenworsttendata

\pgfplotstableread[row sep=\\, col sep=&]
{alg & worst10 \\
\rotatebox[origin=c]{90}{FedAvg} & 30.35 \\
\rotatebox[origin=c]{90}{$q$-FFL} & 27.14 \\
\rotatebox[origin=c]{90}{AFL} & 0 \\
\rotatebox[origin=c]{90}{GiFair} & 30.41 \\
\rotatebox[origin=c]{90}{TERM} & 30.34 \\
\rotatebox[origin=c]{90}{PropFair} & 32.14 \\
\rotatebox[origin=c]{90}{$\Delta$-\FL} & 29.95\\
\rotatebox[origin=c]{90}{VRed} & 33.55 \\
\rotatebox[origin=c]{90}{Semi-VRed} & 35.55 \\
}\stackoverflowworsttendata

\subsection{Comparison of \VRed and \SemiVRed with other baseline algorithms}
As observed in \cref{fig:comparison}, \SemiVRed outperforms almost all the existing baseline algorithms in terms of the fairness in the system.  Furthermore, \SemiVRed improves the system's overall average performance (mean test accuracy) for three of the datasets as well. For instance, as can be observed from the results obtained for StackOverflow (see \cref{tab:detailed_results} in \S~\ref{sec:appendix_setup} in the appendix for evaluation in terms of various fairness metrics), \SemiVRed improves both fairness and mean test accuracy by 3\% and 2.7\%, respectively. Also, we can observe the competitive performance of \VRed.

\subsection{Superiority of \SemiVRed over \VRed and the other baseline algorithms}

As discussed in \S~\ref{sec:background}, the existing fair \FL\ algorithms usually suppress the well-performing clients in order to improve the clients with worse performance. However, \SemiVRed, thanks to its efficient formulation, tries to avoid this. In order to get a better understanding of this, after running the simple vanilla  \FedAvg on CIFAR-100, we divide the existing 50 clients into two sets: 1. \emph{suffering clients}: those with test accuracies below the \FedAvg mean accuracy (22 clients in our experiment) 2. \emph{well-performing clients}: those with test accuracies above the \FedAvg mean accuracy (28 clients). Then, we run each of the other algorithms and compare the performance change that they make for the two sets of clients with each other. In \cref{tab:comparison_vred_semivred}, we have done this comparison between different algorithms. The results clearly deliver two important messages: 1. the existing algorithms mostly suppress the well-performing clients, due to the more attention that they pay to the worst-off clients or enforcing equality between clients' losses 2. \SemiVRed has the least suppression of well-performing clients: with \SemiVRed, 53.17\% of the well-performing clients experience lower test accuracy compared to when using \FedAvg, and the average amount of accuracy drop for them is -0.42. \SemiVRed also results in the highest average improvement of suffering clients: 65.40\% of the suffering clients experience a higher test accuracy compared to when using \FedAvg. The average amount of accuracy improvement among suffering clients is +1.47. The above results altogether result in improvement of both the fairness and the system overall average performance simultaneously.

\section{Conclusion}
In this work, we introduced two novel fair \FL algorithms: \VRed and \SemiVRed. In order to address the drawback of most of the existing fair \FL algorithms, which is suppression of well-performing clients, we proposed \SemiVRed, which uses a one-sided regularization term to efficiently model performance unfairness in a federated learning system. Our experimental results show that \SemiVRed improves the worst-off clients performance without much suppression of well-performing ones. These altogether improve the system's overall average performance as well. Accordingly, \SemiVRed achieves SoTA performance in terms of both the overall average accuracy and fairness.

\appendix

\bibliographystyle{named}
\bibliography{ijcai23}

\begin{thebibliography}{}

\bibitem[\protect\citeauthoryear{Ballestero}{2005}]{MSVEF}
Enrique Ballestero.
\newblock Mean‐semivariance efficient frontier: A downside risk model for portfolio selection.
\newblock {\em Applied Mathematical Finance}, 12:1 -- 15, 2005.

\bibitem[\protect\citeauthoryear{Ben-Tal \bgroup \em et al.\egroup }{2013}]{BenTal2013RobustSO}
Aharon Ben-Tal, Dick den Hertog, Anja~De Waegenaere, Bertrand Melenberg, and Gijs Rennen.
\newblock Robust solutions of optimization problems affected by uncertain probabilities.
\newblock {\em Advanced Risk \& Portfolio Management{\textregistered} Research Paper Series}, 2013.

\bibitem[\protect\citeauthoryear{Bertsimas \bgroup \em et al.\egroup }{2018a}]{Bertsimas2018DatadrivenRO}
Dimitris Bertsimas, Vishal Gupta, and Nathan Kallus.
\newblock Data-driven robust optimization.
\newblock {\em Mathematical Programming}, 167:235--292, 2018.

\bibitem[\protect\citeauthoryear{Bertsimas \bgroup \em et al.\egroup }{2018b}]{Bertsimas2018RobustSA}
Dimitris Bertsimas, Vishal Gupta, and Nathan Kallus.
\newblock Robust sample average approximation.
\newblock {\em Mathematical Programming}, 171:217--282, 2018.

\bibitem[\protect\citeauthoryear{Bietti and Wei}{2022}]{pmlr-v162-bietti22a}
Alberto Bietti and Chen-Yu Wei.
\newblock Personalization improves privacy-accuracy tradeoffs in federated learning.
\newblock In Kamalika Chaudhuri, Stefanie Jegelka, Le~Song, Csaba Szepesvari, Gang Niu, and Sivan Sabato, editors, {\em Proceedings of the 39th International Conference on Machine Learning}, volume 162 of {\em Proceedings of Machine Learning Research}, pages 1945--1962. PMLR, 17--23 Jul 2022.

\bibitem[\protect\citeauthoryear{Boasson \bgroup \em et al.\egroup }{2017}]{msv_based_farmework}
Vigdis Boasson, Emil B{\'o}asson, and Zhaoli Zhou.
\newblock Portfolio optimization in a mean-semivariance framework.
\newblock {\em Investment management \& financial innovations}, 8, 2017.

\bibitem[\protect\citeauthoryear{Chen and Chao}{2022}]{chen2022on}
Hong-You Chen and Wei-Lun Chao.
\newblock On bridging generic and personalized federated learning for image classification.
\newblock In {\em International Conference on Learning Representations}, 2022.

\bibitem[\protect\citeauthoryear{Cui \bgroup \em et al.\egroup }{2021}]{Cui2021AddressingAD}
Sen Cui, Weishen Pan, Jian Liang, Changshui Zhang, and Fei Wang.
\newblock Addressing algorithmic disparity and performance inconsistency in federated learning.
\newblock In {\em NeurIPS}, 2021.

\bibitem[\protect\citeauthoryear{Darlow \bgroup \em et al.\egroup }{2018}]{darlow2018cinic}
Luke~N Darlow, Elliot~J Crowley, Antreas Antoniou, and Amos~J Storkey.
\newblock {CINIC}-10 is not {ImageNet} or {CIFAR}-10.
\newblock {\em arXiv preprint arXiv:1810.03505}, 2018.

\bibitem[\protect\citeauthoryear{Deng \bgroup \em et al.\egroup }{2020}]{Deng2020DistributionallyRF}
Yuyang Deng, Mohammad~Mahdi Kamani, and Mehrdad Mahdavi.
\newblock Distributionally robust federated averaging.
\newblock {\em ArXiv}, abs/2102.12660, 2020.

\bibitem[\protect\citeauthoryear{Gorbunov \bgroup \em et al.\egroup }{2021}]{GorbunovHR21}
Eduard Gorbunov, Filip Hanzely, and Peter Richt{\'a}rik.
\newblock Local {SGD}: Unified theory and new efficient methods.
\newblock In {\em AISTATS}, volume 130, pages 3556--3564, 2021.

\bibitem[\protect\citeauthoryear{He \bgroup \em et al.\egroup }{2016}]{he2016deep}
Kaiming He, Xiangyu Zhang, Shaoqing Ren, and Jian Sun.
\newblock Deep residual learning for image recognition.
\newblock In {\em Proceedings of the IEEE conference on computer vision and pattern recognition}, pages 770--778, 2016.

\bibitem[\protect\citeauthoryear{Hochreiter and Schmidhuber}{1997}]{hochreiter1997long}
Sepp Hochreiter and J{\"u}rgen Schmidhuber.
\newblock Long short-term memory.
\newblock {\em Neural computation}, 9(8):1735--1780, 1997.

\bibitem[\protect\citeauthoryear{Huo \bgroup \em et al.\egroup }{2020}]{HuoYGCH20}
Zhouyuan Huo, Qian Yang, Bin Gu, Lawrence Carin, and Heng Huang.
\newblock Faster on-device training using new federated momentum algorithm.
\newblock arXiv:2002.02090, 2020.

\bibitem[\protect\citeauthoryear{Khaled \bgroup \em et al.\egroup }{2020}]{Khaled2019FirstAO}
Ahmed Khaled, Konstantin Mishchenko, and Peter Richt{\'a}rik.
\newblock First analysis of local gd on heterogeneous data.
\newblock arXiv:1909.04715, 2020.

\bibitem[\protect\citeauthoryear{Kolmogorov}{1930}]{Kolmogorov30}
Andrey Kolmogorov.
\newblock On the notion of mean.
\newblock {\em Atti della Academia Nazionale dei Lincei}, 12(9):388--391, 1930.
\newblock reprinted in ``Selected Works I of Andrey Kolmogorov: Mathematics and Mechanics'', pp. 144–146, 1991.

\bibitem[\protect\citeauthoryear{Krizhevsky \bgroup \em et al.\egroup }{2009}]{krizhevsky2009learning}
Alex Krizhevsky, Geoffrey Hinton, et~al.
\newblock Learning multiple layers of features from tiny images, 2009.
\newblock Technical report.

\bibitem[\protect\citeauthoryear{Krueger \bgroup \em et al.\egroup }{2021}]{Krueger2021OutofDistributionGV}
David Krueger, Ethan Caballero, J{\"o}rn-Henrik Jacobsen, Amy Zhang, Jonathan Binas, R{\'e}mi~Le Priol, and Aaron~C. Courville.
\newblock Out-of-distribution generalization via risk extrapolation (rex).
\newblock In {\em ICML}, 2021.

\bibitem[\protect\citeauthoryear{Lan \bgroup \em et al.\egroup }{2010}]{lan_axiomatic_2010}
Tian Lan, David Kao, Mung Chiang, and Ashutosh Sabharwal.
\newblock An {axiomatic} {theory} of {fairness} in {network} {resource} {allocation}.
\newblock In {\em 2010 {Proceedings} {IEEE} {INFOCOM}}, pages 1--9, March 2010.
\newblock ISSN: 0743-166X.

\bibitem[\protect\citeauthoryear{Li \bgroup \em et al.\egroup }{2020a}]{li2020tilted}
Tian Li, Ahmad Beirami, Maziar Sanjabi, and Virginia Smith.
\newblock Tilted empirical risk minimization.
\newblock In {\em International Conference on Learning Representations}, 2020.

\bibitem[\protect\citeauthoryear{Li \bgroup \em et al.\egroup }{2020b}]{tian_2018_fedprox}
Tian Li, Anit~Kumar Sahu, Manzil Zaheer, Maziar Sanjabi, Ameet Talwalkar, and Virginia Smith.
\newblock Federated optimization in heterogeneous networks.
\newblock In {\em Proceedings of Machine Learning and Systems}, volume~2, pages 429--450, 2020.

\bibitem[\protect\citeauthoryear{Li \bgroup \em et al.\egroup }{2020c}]{li2019fair}
Tian Li, Maziar Sanjabi, Ahmad Beirami, and Virginia Smith.
\newblock Fair resource allocation in federated learning.
\newblock In {\em International Conference on Learning Representations}, 2020.

\bibitem[\protect\citeauthoryear{{Li} \bgroup \em et al.\egroup }{2020d}]{xiang_convergence_2019}
Xiang {Li}, Kaixuan {Huang}, Wenhao {Yang}, Shusen {Wang}, and Zhihua {Zhang}.
\newblock On the convergence of fedavg on non-iid data.
\newblock In {\em {ICLR}}, 2020.

\bibitem[\protect\citeauthoryear{Li \bgroup \em et al.\egroup }{2021}]{li_ditto_2021}
Tian Li, Shengyuan Hu, Ahmad Beirami, and Virginia Smith.
\newblock Ditto: Fair and robust federated learning through personalization.
\newblock In {\em {ICML}}, volume 139 of {\em Proceedings of Machine Learning Research}, pages 6357--6368. {PMLR}, 2021.

\bibitem[\protect\citeauthoryear{Malekmohammadi \bgroup \em et al.\egroup }{2021a}]{malekmohammadi2021operator}
Saber Malekmohammadi, Kiarash Shaloudegi, Zeou Hu, and Yaoliang Yu.
\newblock An operator splitting view of federated learning, 2021.

\bibitem[\protect\citeauthoryear{Malekmohammadi \bgroup \em et al.\egroup }{2021b}]{malek_ecmlpkdd}
Saber Malekmohammadi, Kiarash Shaloudegi, Zeou Hu, and Yaoliang Yu.
\newblock Splitting algorithms for federated learning.
\newblock In {\em Machine Learning and Principles and Practice of Knowledge Discovery in Databases}, 2021.

\bibitem[\protect\citeauthoryear{Markowitz}{1952}]{meanvar_mark}
Harry Markowitz.
\newblock Portfolio slection.
\newblock {\em Journal of Finance}, 1952.

\bibitem[\protect\citeauthoryear{Maurer and Pontil}{2009}]{Maurer2009}
Andreas Maurer and Massimiliano Pontil.
\newblock Empirical bernstein bounds and sample-variance penalization.
\newblock In {\em COLT}, 2009.

\bibitem[\protect\citeauthoryear{McMahan \bgroup \em et al.\egroup }{2017}]{McMahanMRHA17}
Brendan McMahan, Eider Moore, Daniel Ramage, Seth Hampson, and Blaise~Ag{\"u}era y~Arcas.
\newblock Communication-efficient learning of deep networks from decentralized data.
\newblock In {\em {AISTATS}}, volume~54, pages 1273--1282, 2017.

\bibitem[\protect\citeauthoryear{Mohri \bgroup \em et al.\egroup }{2019}]{MohriSS19}
Mehryar Mohri, Gary Sivek, and Ananda~Theertha Suresh.
\newblock Agnostic federated learning.
\newblock In {\em {ICML}}, volume~97, pages 4615--4625, 2019.

\bibitem[\protect\citeauthoryear{Namkoong and Duchi}{2017}]{Namkoong2017}
Hongseok Namkoong and John~C. Duchi.
\newblock Variance-based regularization with convex objectives.
\newblock {\em J. Mach. Learn. Res.}, 20:68:1--68:55, 2017.

\bibitem[\protect\citeauthoryear{Oh \bgroup \em et al.\egroup }{2022}]{oh2022fedbabu}
Jaehoon Oh, SangMook Kim, and Se-Young Yun.
\newblock Fed{BABU}: Toward enhanced representation for federated image classification.
\newblock In {\em International Conference on Learning Representations}, 2022.

\bibitem[\protect\citeauthoryear{Pathak and Wainwright}{2020}]{pathak2020fedsplit}
Reese Pathak and Martin~J. Wainwright.
\newblock Fedsplit: An algorithmic framework for fast federated optimization.
\newblock In {\em NeurIPS}, 2020.

\bibitem[\protect\citeauthoryear{Pillutla \bgroup \em et al.\egroup }{2021}]{deltaFL}
Krishna Pillutla, Yassine Laguel, J{\'{e}}r{\^{o}}me Malick, and Za{\"{\i}}d Harchaoui.
\newblock Federated learning with heterogeneous data: {A} superquantile optimization approach.
\newblock {\em CoRR}, abs/2112.09429, 2021.

\bibitem[\protect\citeauthoryear{Pl{\`a}-Santamaria and Bravo}{2013}]{PO_downsiderisk}
David Pl{\`a}-Santamaria and Mila Bravo.
\newblock Portfolio optimization based on downside risk: a mean-semivariance efficient frontier from dow jones blue chips.
\newblock {\em Annals of Operations Research}, 205:189--201, 2013.

\bibitem[\protect\citeauthoryear{Reddi \bgroup \em et al.\egroup }{2020a}]{ReddiCZG20}
Sashank Reddi, Zachary Charles, Manzil Zaheer, Zachary Garrett, Keith Rush, Jakub Kone{\v c}n{y}, Sanjiv Kumar, and H~Brendan McMahan.
\newblock {Adaptive Federated Optimization}.
\newblock arXiv:2003.00295, 2020.

\bibitem[\protect\citeauthoryear{Reddi \bgroup \em et al.\egroup }{2020b}]{reddi2020adaptive}
Sashank~J Reddi, Zachary Charles, Manzil Zaheer, Zachary Garrett, Keith Rush, Jakub Kone{\v{c}}n{\`y}, Sanjiv Kumar, and Hugh~Brendan McMahan.
\newblock Adaptive federated optimization.
\newblock In {\em International Conference on Learning Representations}, 2020.

\bibitem[\protect\citeauthoryear{Rigamonti and Lu{\v{c}}ivjansk{\'a}}{2022}]{MSV_MAP}
Andrea Rigamonti and Katar{\'\i}na Lu{\v{c}}ivjansk{\'a}.
\newblock Mean-semivariance portfolio optimization using minimum average partial.
\newblock {\em Annals of Operations Research}, pages 1--19, 2022.

\bibitem[\protect\citeauthoryear{Shen \bgroup \em et al.\egroup }{2022}]{CLIMB2022}
Zebang Shen, Juan Cervi{\~n}o, Hamed Hassani, and Alejandro Ribeiro.
\newblock An agnostic approach to federated learning with class imbalance.
\newblock In {\em ICLR}, 2022.

\bibitem[\protect\citeauthoryear{Shivaswamy and Jebara}{2010}]{shivaswamy2010a}
Pannagadatta Shivaswamy and Tony Jebara.
\newblock Empirical bernstein boosting.
\newblock In Yee~Whye Teh and Mike Titterington, editors, {\em Proceedings of the Thirteenth International Conference on Artificial Intelligence and Statistics}, volume~9 of {\em Proceedings of Machine Learning Research}, pages 733--740, Chia Laguna Resort, Sardinia, Italy, 13--15 May 2010. PMLR.

\bibitem[\protect\citeauthoryear{Soleimani \bgroup \em et al.\egroup }{2009}]{Soleimani2009MarkowitzbasedPS}
Hamed Soleimani, Hamid~Reza Golmakani, and Mohammad~Hossein Salimi.
\newblock Markowitz-based portfolio selection with minimum transaction lots, cardinality constraints and regarding sector capitalization using genetic algorithm.
\newblock {\em Expert Syst. Appl.}, 36:5058--5063, 2009.

\bibitem[\protect\citeauthoryear{Stuart and Markowitz}{1959}]{meansemivar_mark}
Alan~L. Stuart and Harry~M. Markowitz.
\newblock Portfolio selection: Efficient diversification of investments.
\newblock {\em A Quarterly Journal of Operations Research}, 10:253, 1959.

\bibitem[\protect\citeauthoryear{{The Tensorflow Federated Authors}}{2019}]{authors2019tensorflow}
{The Tensorflow Federated Authors}.
\newblock Tensorflow federated {StackOverflow} dataset.
\newblock 2019.

\bibitem[\protect\citeauthoryear{Wang \bgroup \em et al.\egroup }{2019}]{wang_federated_2019}
Kangkang Wang, Rajiv Mathews, Chlo{\'{e}} Kiddon, Hubert Eichner, Fran{\c{c}}oise Beaufays, and Daniel Ramage.
\newblock Federated evaluation of on-device personalization.
\newblock {\em CoRR}, abs/1910.10252, 2019.

\bibitem[\protect\citeauthoryear{Wang \bgroup \em et al.\egroup }{2020}]{WangYSPK20}
Hongyi Wang, Mikhail Yurochkin, Yuekai Sun, Dimitris Papailiopoulos, and Yasaman Khazaeni.
\newblock Federated learning with matched averaging.
\newblock In {\em {ICLR}}, 2020.

\bibitem[\protect\citeauthoryear{Williamson and Menon}{2019}]{fairnessriskmeasures}
Robert Williamson and Aditya Menon.
\newblock Fairness risk measures.
\newblock In Kamalika Chaudhuri and Ruslan Salakhutdinov, editors, {\em Proceedings of the 36th International Conference on Machine Learning}, pages 6786--6797. PMLR, 2019.

\bibitem[\protect\citeauthoryear{ya Gotoh \bgroup \em et al.\egroup }{2018}]{Gotoh2018RobustEO}
Jun ya~Gotoh, M.~J. Kim, and Andrew E.~B. Lim.
\newblock Robust empirical optimization is almost the same as mean-variance optimization.
\newblock {\em Mutual Funds}, 2018.

\bibitem[\protect\citeauthoryear{Yue \bgroup \em et al.\egroup }{2021}]{Yue2021GIFAIRFLAA}
Xubo Yue, Maher Nouiehed, and Raed~Al Kontar.
\newblock Gifair-fl: An approach for group and individual fairness in federated learning.
\newblock {\em ArXiv}, abs/2108.02741, 2021.

\bibitem[\protect\citeauthoryear{Zhang \bgroup \em et al.\egroup }{2018}]{MVsurvey}
Yuanyuan Zhang, Xiang Li, and Sini Guo.
\newblock Portfolio selection problems with markowitz’s mean–variance framework: a review of literature.
\newblock {\em Fuzzy Optimization and Decision Making}, 17:125--158, 2018.

\bibitem[\protect\citeauthoryear{Zhang \bgroup \em et al.\egroup }{2022a}]{Zhang2022ProportionalFI}
Guojun Zhang, Saber Malekmohammadi, Xi~Chen, and Yaoliang Yu.
\newblock Equality is not equity: Proportional fairness in federated learning.
\newblock {\em CoRR}, abs/2202.01666, 2022.

\bibitem[\protect\citeauthoryear{Zhang \bgroup \em et al.\egroup }{2022b}]{pmlr-v162-zhang22o}
Xu~Zhang, Yinchuan Li, Wenpeng Li, Kaiyang Guo, and Yunfeng Shao.
\newblock Personalized federated learning via variational {B}ayesian inference.
\newblock In Kamalika Chaudhuri, Stefanie Jegelka, Le~Song, Csaba Szepesvari, Gang Niu, and Sivan Sabato, editors, {\em Proceedings of the 39th International Conference on Machine Learning}, volume 162 of {\em Proceedings of Machine Learning Research}, pages 26293--26310. PMLR, 17--23 Jul 2022.

\end{thebibliography}

\appendix
\clearpage
\appendix

\onecolumn

\newpage
\begin{center}
\Large
\bf
Appendix for \emph{Semi-Variance Reduction for Fair Federated Learning}
\end{center}

\section{Proofs and derivations}\label{sec:appendix_proofs}

\paragraph{Derivation of \eqref{eq:relate_Gi_Ve}}

\begin{align}\label{eq:relate_Gi_Ve_proof}
F_{\VRed}(\thetav) = \sum_i f_i(\thetav) + \beta \sum_{i} \bigg|f_i(\thetav) - \frac{1}{n} \sum_j f_j(\thetav)\bigg|^2 &= \sum_i f_i(\thetav) + \beta \sum_{i} \bigg|\frac{n-1}{n} f_i(\thetav) - \frac{1}{n} \sum_{j\neq i} f_j(\thetav)\bigg|^2 \nonumber \\
& = \sum_i f_i(\thetav) + \frac{\beta}{n^2}\sum_{i}\bigg|\sum_{j\neq i} (f_i(\thetav) - f_j(\thetav))\bigg|^2  \nonumber \\
& \leq \sum_i f_i(\thetav) + \frac{\beta}{n^2} \sum_{i}\bigg(\sum_{j \neq i} \bigg| f_i(\thetav) - f_j(\thetav)\bigg|\bigg)^2 \nonumber\\ 
& \leq \sum_i f_i(\thetav) + \frac{\beta}{n^2}\sum_i (n-1)\sum_{j \neq i} \bigg| f_i(\thetav) - f_j(\thetav)\bigg|^2 \nonumber\\
& \leq \sum_i f_i(\thetav) + \frac{\beta}{n}\sum_i\sum_{j \neq i} \bigg| f_i(\thetav) - f_j(\thetav)\bigg|^2 \nonumber\\ 
& = \sum_i f_i(\thetav) + \frac{2\beta}{n} \sum_{j \neq i} \bigg| f_i(\thetav) - f_j(\thetav)\bigg|^2 \nonumber \\
& \leq \sum_i f_i(\thetav) + \frac{2\beta}{n} \sum_{j \neq i} \bigg| f_i(\thetav) - f_j(\thetav)\bigg|,
\end{align}
where the last inequality is always true if $\forall i,j: |f_i(\thetav)-f_j(\thetav)|<1$, which always happens if $\forall i: f_i(\thetav)<1$.

\gradvred*
\begin{proof}
From \eqref{eq:v-red} and with $p_i=\frac{1}{n}$, we have:

\begin{align}
    n\nabla F(\thetav) &= \sum_i \nabla f_i(\thetav)+2\beta\sum_i\Big[ \Big(f_i(\thetav)-\overline{f}(\thetav)\Big)\Big(\nabla f_i(\thetav)-\nabla\overline{f}(\thetav)\Big)\Big]
    \nonumber \\
    & = \sum_i \nabla f_i(\thetav) + 2\beta \sum_i \Big[ \Big(f_i(\thetav)-\overline{f}(\thetav)\Big) \nabla f_i(\thetav) - \Big(f_i(\thetav)-\overline{f}(\thetav)\Big) \nabla \overline{f}(\thetav) \Big]
    \nonumber \\
    & = \sum_i \Big( 1+2\beta(f_i(\thetav)-\overline{f}(\thetav))\Big)\nabla f_i(\thetav) - 2\beta \sum_i \Big(f_i(\thetav) - \overline{f}(\thetav) \Big)\nabla \overline{f}(\thetav)
    \nonumber \\
    & = \sum_i \Big( 1+2\beta(f_i(\thetav)-\overline{f}(\thetav))\Big)\nabla f_i(\thetav)
\end{align}

Hence,

\begin{align}
    \nabla F(\thetav) = \sum_i \frac{1+2\beta(f_i(\thetav)-\overline{f}(\thetav))}{n}\nabla f_i(\thetav)
\end{align}
\end{proof}

\gradsemivred*
\begin{proof}
From \eqref{eq:semivar-red} and with $p_i=\frac{1}{n}$, we have:

\begin{align}
    n\nabla F(\thetav) &= \sum_i \nabla f_i(\thetav)+2\beta\sum_{i\in>_{\mathcal{C}}}\Big[ \Big(f_i(\thetav)-\overline{f}(\thetav)\Big)\Big(\nabla f_i(\thetav)-\nabla\overline{f}(\thetav)\Big)\Big]
    \nonumber \\
    & = \sum_i \nabla f_i(\thetav) + 2\beta \sum_{i\in>_{\mathcal{C}}} \Big[ \Big(f_i(\thetav)-\overline{f}(\thetav)\Big) \nabla f_i(\thetav) - \Big(f_i(\thetav)-\overline{f}(\thetav)\Big) \nabla \overline{f}(\thetav) \Big]
    \nonumber \\
    & = \sum_{i\notin>_{\mathcal{C}}}\nabla f_i(\thetav) + \sum_{i\in>_{\mathcal{C}}} \Big( 1+2\beta(f_i(\thetav)-\overline{f}(\thetav))\Big)\nabla f_i(\thetav) - 2\beta \sum_{i\in>_{\mathcal{C}}} \Big(f_i(\thetav) - \overline{f}(\thetav) \Big)\nabla \overline{f}(\thetav)
    \nonumber \\
\end{align}

The last term in the above equation can be written as:

\begin{align}
     & - 2\beta \sum_{i\in>_{\mathcal{C}}} \Big(f_i(\thetav) - \overline{f}(\thetav) \Big)\nabla \overline{f}(\thetav) = - \Big [\frac{2\beta}{n} \Big( \sum_{i\in>_{\mathcal{C}}} f_i(\thetav)-\overline{f}(\thetav) \Big) \times \big(\sum_j \nabla f_j(\thetav)\Big)\Big]
\end{align}
Hence,

\begin{align}
    n\nabla F(\thetav) & = \sum_{i\in>_{\mathcal{C}}} \Big( 1+2\beta(f_i(\thetav)-\overline{f}(\thetav))-\frac{2\beta}{n} \Big( \sum_{j\in>_{\mathcal{C}}} f_j(\thetav)-\overline{f}(\thetav)\Big)\Big)\nabla f_i(\thetav) \\
    \nonumber
    & + \sum_{i\notin>_{\mathcal{C}}} \Big(1-\frac{2\beta}{n} \Big( \sum_{j\in>_{\mathcal{C}}} f_j(\thetav)-\overline{f}(\thetav)\Big)\Big)\nabla f_i(\thetav)
\end{align}

Therefore, 

\begin{align}
    \nabla F(\thetav) & = \sum_{i\in>_{\mathcal{C}}} \Big(\frac{1+2\beta(f_i(\thetav)-\overline{f}(\thetav))-\frac{2\beta}{n} \Big( \sum_{j\in>_{\mathcal{C}}} f_j(\thetav)-\overline{f}(\thetav)\Big)}{n}\Big)\nabla f_i(\thetav) \\
    \nonumber
    & + \sum_{i\notin>_{\mathcal{C}}} \Big(\frac{1-\frac{2\beta}{n} \Big( \sum_{j\in>_{\mathcal{C}}} f_j(\thetav)-\overline{f}(\thetav)\Big)}{n}\Big)\nabla f_i(\thetav)
\end{align}

\end{proof}

\newpage
\section{Experimental setup}
\label{sec:appendix_setup}
In this section, we provide more experimental details that are deferred from the main paper. For each experiment, we report the average result obtained from three runs with different random seeds. For our experiments, we used an internal GPU server with six NVIDIA Tesla P100. The experiments last about 4 weeks in total.

\subsection{Datasets and models}
In this subsection, we describe the datasets we use in our experiments. For all the datasets we use a batch size of 64.

\textbf{CIFAR-10/100} \cite{krizhevsky2009learning} are two image classification datasets vastly used in the literature as benchmark datasets. Each of these datasests contains 50000 sample images with 10/100 balanced classes for CIFAR-10 and CIFAR-100, respectively. We use Dirichlet allocation \cite{wang_federated_2019} to distribute the data among 50 clients with label shift: we split the set of samples from class $k$ ($\Sc_k$) to $n$ subsets ($\Sc_k=\Sc_{k,1}\cup\Sc_{k,2}\cup\ldots\cup\Sc_{k,n}$), where $n$ is the number of clients and $\Sc_{k,j}$ corresponds to the client $j$. We do the split based on Dirichlet distribution with parameter 0.05 (\texttt{Dir(0.05)}). When the split is done for all classes, we gather the samples corresponding to each client from different classes: assuming there are $C$ classes in total $\Sc_{1,j}\cup\Sc_{2,j}\cup\ldots\cup\Sc_{C,j}$ is the data allocated to the client $j$. Having allocated the data of each client, we split them into train and test set for each client. The train-test split ratio is $50$-$50$ and $60$-$40$ for CIFAR-10 and CIFAR-100, respectively. 

\textbf{CINIC-10} \cite{darlow2018cinic}
is another benchmark vision dataset that we use in our experiments. There are a total of 270,000 sample images, which we distribute with label shift between 50 clients based on \texttt{Dir(0.5)} distribution \cite{wang_federated_2019}. We then randomly split the data of each client into train and test sets with split ratio 50-50.

\textbf{StackOverflow} \cite{authors2019tensorflow}
is a language dataset consisting of Shakespeare dialogues for the task of next word prediction. There is a natural heterogeneous partition of the dataset and we treat each speaking role as a client. We filter out the clients (speaking roles) with less than 10,000 samples from the original dataset and randomly select 20 clients from the remaining. Finally, we split the data of each client into train and test sets with a ratio of 50-50.

\Cref{tab:datasets} provides a summary of the datasets we used and the models used for each of them.


\begin{table*}[h]
\centering
\caption{Details of the experiments and the datasets. ResNet-18: residual neural network 
; GN: Group Normalization
; RNN: Recurrent Neural Network
; LSTM: Long Short-Term Memory layer
; FC: fully connected layer.}
\label{tab:datasets}
\small
\setlength\tabcolsep{2pt}
\begin{tabular}{cccccc}
\toprule
\bf{Dataset} & \bf{Train samples} & \bf{Test samples} & \bf{Partition method} & \bf{clients} & \bf{Model} 
\\ 
\midrule
CIFAR-10 & 24959 & 25041 & \texttt{Dir(0.05)} & 50 & ResNet-18 + GN\\

CIFAR-100 & 39445 & 10555 & \texttt{Dir(0.05)} & 50 & ResNet-18 + GN
\\

CINIC-10 & 134713 & 134966 & \texttt{Dir(0.5)} &  50 & ResNet-18 + GN
\\


StackOverflow & 109671 & 109621 & realistic partition & 20 & RNN (1 LSTM + 2 FC)
\\
\bottomrule
\end{tabular}
\end{table*}

\subsection{Algorithms and their hyperparameters}

We use most recent fair \FL algorithms existing in the literature as our baseline algorithms, including: \FedAvg \cite{McMahanMRHA17}, \qFFL \cite{li2019fair}, \AFL \cite{MohriSS19}, \PropFair \cite{Zhang2022ProportionalFI}, \TERM\cite{li2020tilted}, \GiFair\cite{Yue2021GIFAIRFLAA}. For each pair of dataset and algorithm, we find the best learning rate based on a grid search. In the following, we have reported the learning rate grid we use for each dataset:

\begin{itemize}
\item CIFAR-10: \texttt{\{1e-3, 2e-3, 5e-3, 1e-2, 2e-2, 5e-2\}};
\item CIFAR-100: \texttt{\{1e-3, 2e-3, 5e-3, 1e-2, 2e-2, 5e-2\}};
\item CINIC-10:  \texttt{\{1e-3, 2e-3, 5e-3, 1e-2, 2e-2, 5e-2\}};
\item StackOverflow: \texttt{\{1e-2, 5e-2, 1e-1, 5e-1, 1\}}.
\end{itemize}

The best learning rate used for each (dataset, algorithm) pair is reported in \Cref{tab:best_lr}.



\begin{table*}[h]
\centering
\caption{The best learning rates used for training each algorithm on different datasets.}
\label{tab:best_lr}
\small
\setlength\tabcolsep{2pt}
\begin{tabular}{cccccccccc}
\toprule
\bf{Datasets} & \bf{\FedAvg} & \bf{\qFFL} & \bf{\AFL} & \bf{\TERM} & \bf{\PropFair} & \bf{\GiFair} & \bf{$\Delta$-\FL} & \bf{\VRed} & \bf{\SemiVRed} 
\\ 
\midrule
CIFAR-10 & {\tt 5e-3} & {\tt 5e-3} & {\tt 5e-3} & {\tt 1e-2} & {\tt 1e-2} & {\tt 5e-3} & {\tt 5e-3} & {\tt 5e-3} & {\tt 5e-3}
\\

CIFAR-100 & {\tt 2e-3} & {\tt 2e-3} & {\tt 5e-3} & 
{\tt 1e-2} & {\tt 1e-2} & {\tt 5e-3} & {\tt 2e-3} & {\tt 5e-3} & {\tt 5e-3}
\\

CINIC-10 & {\tt 1e-2} & {\tt 5e-3} & {\tt 1e-2} & {\tt 1e-2}  & {\tt 2e-2} & {\tt 2e-2} & {\tt 1e-2} & {\tt 5e-3} & {\tt 5e-3} 
\\


StackOverflow & {\tt 2e-1} & {\tt 5e-2} & {\tt 5e-2} & {\tt 2e-1}  & {\tt 5e-1} & {\tt 2e-1} & {\tt 2e-1} & {\tt 5e-1} & {\tt 5e-1} 
\\
\bottomrule
\end{tabular}
\end{table*}

We now explain the algorithms we use and how we tune their hyperparameters. We adapt \TERM with only client-level fairness ($\alpha>0$) and no sample-level fairness ($\tau=0$). We tune the hyperparameter $\alpha$ for each dataset based on a grid search in the grid $\{ 0.01, 0.1, 0.5, 1\}$. We have reported the best value of $\alpha$ for each dataset in \Cref{tab:hyperparameters}. For \AFL, there are two hyperparameters: $\gamma_w$ and $\gamma_\lambda$. We tune the learning rate $\gamma_w$ from the corresponding grid and choose the default value $\gamma_\lambda=0.1$. For \qFFL, we use the $q$-FedAvg algorithm \cite{li2019fair}. We also tune the hyperparameter $q$ from the grid $\{0.01, 0.1, 1\}$. We find that for all the used datasets, $q=0.1$ has the best peformance (as reported in \Cref{tab:hyperparameters}). We also tried larger values out of the grid and they often lead to divergence of the \qFFL algorithm. We adopt the Global \GiFair model \cite{Yue2021GIFAIRFLAA}, which results in a single global model. In order to have client-level fairness, we treat each client as a group of size 1. For tuning the regularization weight of \GiFair ($\lambda$), we follow \cite{Yue2021GIFAIRFLAA}. As stated in the paper, there is an upper-bound for $\lambda$ (see \S 3 in the paper). For our experiments, the upper-bound is $\lambda\leq \min_i \{\frac{w_i}{n-1}\}$, where $w_i$ is the ratio of total samples allocated to the client $i$ and $n$ is the number of clients. We try four different values in the interval and choose the best one. When the number of clients is large, this upper-bound is small, and for all of our datasets, this upper-bound was the best value, as reported in \Cref{tab:hyperparameters}. We tune $M$ for the \PropFair algorithm based on a grid search in $\{2,3,4,5\}$. For $\Delta$-\FL, we tried different values of $\alpha$
 in the grid $\{0.1, 0.2, 0.3, 0.4, 0.5, 0.6, 0.7, 0.8, 0.9\}$ and chose the best one. Therefore, we have reported \textbf{the best results} that one could get from the algorithm. Finally, for our \VRed and \SemiVRed algorithms, we tune the regularization weight $\beta$ based on grid search on the grid $\{0.01, 0.05, 0.1, 0.2, 0.5, 1\}$. Larger values of $\beta$ often resulted in the divergence of the algorithms. We have reported the best value of all of the hyperparameters for each dataset in \Cref{tab:hyperparameters}.

\begin{table*}[ht]
\centering
\caption{The best values of hyperparameters used for different datasets, chosen based on grid search.}
\label{tab:hyperparameters}
\small
\setlength\tabcolsep{2pt}
\begin{tabular}{cccccc}
\toprule
\bf{Algorithm} & \bf{CIFAR-10} & \bf{CIFAR-100} & \bf{CINIC-10} & \bf{StackOverflow}
\\ 
\midrule
\textbf{\qFFL} $q$ & \tt 1e-1 & \tt 1e-1 & \tt 1e-1  & \tt 1e-1 \\
\midrule
\textbf{\TERM} $\alpha$ & \tt 1e-2 & \tt 5e-1 & \tt 5e-1 & \tt 5e-1
\\
\midrule
\textbf{\GiFair} $\lambda$ & \tt 6e-5 & \tt 2.6e-4 & \tt 5e-5 & \tt 2.4e-3
\\
\midrule
\textbf{\PropFair} $M$ & \tt 3 & \tt 3 & \tt 5 & \tt 4
\\
\midrule
\textbf{$\Delta$-\FL} $\alpha$ & \tt 2e-1 & \tt 4e-1 & \tt 5e-1 & \tt 6e-1
\\
\midrule
\textbf{\VRed} $\beta$ & \tt 5e-1 & \tt 1e-1 & \tt 2e-1 & \tt 1e-1
\\
\midrule
\textbf{\SemiVRed} $\beta$ & \tt 5e-1 & \tt 1e-2 & \tt 2e-1 & \tt 2e-1
\\

\bottomrule
\end{tabular}
\end{table*}

\subsection{Detailed results}
In \Cref{tab:detailed_results}, we report detailed results obtained from the algorithms we study in this work. We use a default batch size of 64 for all the experiments. The statistics we report include: 1. the average test accuracy across clients (overall average performance) 2. the standard deviation of test accuracies across clients 3. the lowest (worst) test accuracy among clients 4. the lowest 10\% test accuracies 5. the lowest 20\% test accuracies 6. the highest 10\% test accuracies. For each experiment, we report the average result obtained from three runs with different random seeds. As can be observed, our proposed algorithms \VRed and \SemiVRed consistently beat almost all the baseline algorithms across different datasets in terms of various fairness metrics. Also, \SemiVRed can improve the overall average performance (mean test accuracy) for three of the datsets as well.

Following \Cref{fig:comparison}, we have compared our proposed algorithms with the baseline algorithms in terms of their worst 20\% test accuracies as well and the visualized results are shown in \Cref{fig:worst20accs}.

\begin{table*}[hbt!]
\centering
\caption{Comparison among federated learning algorithms on CIFAR-10, CIFAR-100, CINIC-10 and StackOverflow datasets with test accuracies (\%) from clients. All algorithms are fine-tuned. {\bf Mean}: the average test accuracy across all clients; {\bf Std}: standard deviation of clients test accuracies; {\bf Worst}: the worst test accuracy among clients; {\bf Worst (10/20\%)}: the worst 10/20\% test accuracies of clients; {\bf Best (10\%)}: the best 10\% test accuracies of clients. 
}
\label{tab:detailed_results}
\small
\setlength\tabcolsep{2pt}
\begin{tabular}{cccccccc}
\toprule
\bf Dataset & \bf Algorithm & \bf Mean & \bf Std & \bf Worst & \bf Worst (10\%) & \bf Worst (20\%) & \bf Best (10\%)
\\ 
\midrule\midrule
 & \FedAvg & 
 43.45$_{\pm 0.60}$ & 
 14.33$_{\pm 0.62}$ &
 9.35$_{\pm 3.13}$ & 
 18.86$_{\pm 0.99}$ &
 23.77$_{\pm 0.70}$ &
 68.97$_{\pm 0.81}$\\

\multirow{5}{*}{\rotatebox[origin=c]{90}{CIFAR-10}} & \qFFL & 
45.46$_{\pm 0.74}$ & 
14.31$_{\pm 2.03}$ &
18.71$_{\pm 3.36}$ & 
21.23$_{\pm 3.06}$ &
25.95$_{\pm 3.51}$ &
72.31$_{\pm 2.88}$\\

& \AFL & 
- & 
- &
- & 
- &
- &
-
\\
& \GiFair&  
45.05$_{\pm 0.64}$ & 
12.93$_{\pm 0.44}$ &
16.79$_{\pm 3.55}$ &
22.65$_{\pm 2.03}$ &
26.52$_{\pm 0.76}$ &
65.62$_{\pm 2.59}$
\\
& \TERM & 
\bf45.61$_{\pm 1.03}$ & 
\bf12.24$_{\pm 0.56}$ &
13.80$_{\pm 5.25}$ &
24.89$_{\pm 1.37}$ &
29.34$_{\pm 0.61}$ &
68.65$_{\pm 1.27}$
\\

& \PropFair &  
36.95$_{\pm 0.21}$ & 
15.16$_{\pm 1.33}$ & 
1.14$_{\pm 1.62}$ & 
12.49$_{\pm 0.28}$ &
16.66$_{\pm 1.31}$ &
66.04$_{\pm 4.24}$
\\

\color{blue}
& $\Delta$-\FL &  
40.32$_{\pm 0.31}$ & 
16.51$_{\pm 1.26}$ & 
8.8$_{\pm 0.37}$ & 
16.94$_{\pm 0.55}$ &
21.31$_{\pm 0.31}$ &
72.35$_{\pm 2.11}$

\color{black}
\\
\cmidrule{2-8}
& \VRed &  
44.43$_{\pm 0.88}$ &
13.05$_{\pm 1.32}$ & 
18.61 $_{\pm 3.12}$ &
24.28 $_{\pm 2.22}$ & 
27.46$_{\pm 1.56}$ & 
69.31$_{\pm 3.48}$
\\

& \SemiVRed &  
45.47$_{\pm 0.10}$ &
12.58$_{\pm 0.23}$ & 
\bf19.04$_{\pm 6.73}$ &
\bf27.08$_{\pm 1.76}$ &
\bf30.34$_{\pm 1.05}$ &
\bf72.50$_{\pm 0.88}$
\\

\midrule
 & \FedAvg & 
 20.20$_{\pm 0.31}$ & 
 6.50$_{\pm 0.21}$ &
 \bf10.36$_{\pm 0.69}$ & 
 11.07$_{\pm 0.54}$ &
 12.49$_{\pm 0.51}$ &
 33.88$_{\pm 0.09}$\\

\multirow{5}{*}{\rotatebox[origin=c]{90}{CIFAR-100}} & \qFFL & 
20.25$_{\pm 0.11}$ & 
6.30$_{\pm 0.27}$ &
9.66$_{\pm 0.33}$ &
11.09$_{\pm 0.67}$ &
12.52$_{\pm 0.46}$ &
33.96$_{\pm 0.90}$ \\ 

& \AFL & 
18.98$_{\pm 0.71}$ & 
\bf4.91$_{\pm 0.37}$ &
9.81$_{\pm 0.69}$ & 
11.31$_{\pm 0.18}$ &
12.72$_{\pm 0.21}$ &
28.68$_{\pm 1.71}$
\\
& \GiFair&  
19.81$_{\pm0.32}$ & 
5.74$_{\pm0.16}$ & 
9.35$_{\pm0.34}$ &
11.19$_{\pm0.24}$ &
12.59$_{\pm 0.49}$ &
32.30$_{\pm0.32}$
\\

& \TERM & 
18.00$_{\pm0.41}$ & 
6.05$_{\pm0.18}$ &
8.86$_{\pm0.50}$ &
10.02$_{\pm0.44}$ & 
11.04$_{\pm 0.51}$ &
31.58$_{\pm0.98}$
\\

& \PropFair 
& 14.97$_{\pm 0.68}$& 
6.44$_{\pm 0.34}$& 
5.40$_{\pm 1.28}$& 
7.00$_{\pm 1.11}$& 
8.06$_{\pm 1.07}$& 
28.89$_{\pm 0.91}$
\\

& $\Delta$-\FL &  
18.39$_{\pm 0.15}$ & 
5.42$_{\pm 0.65}$ & 
8.11$_{\pm 0.84}$ & 
10.06$_{\pm 0.78}$ &
11.28$_{\pm 0.81}$ &
28.95$_{\pm 1.27}$
\\
\cmidrule{2-8} 
& \VRed &  
20.42$_{\pm0.36}$ &
6.08$_{\pm0.05}$ & 
9.43$_{\pm1.01}$ &
11.21$_{\pm0.74}$ &
12.81$_{\pm 0.85}$ &
33.59$_{\pm1.11}$
\\

& \SemiVRed &   
\bf20.85$_{\pm0.39}$ &
6.26$_{\pm0.18}$ & 
9.12$_{\pm1.47}$ &
\bf11.86$_{\pm0.74}$ & 
\bf13.46$_{\pm 0.63}$ &
\bf34.57$_{\pm1.20}$
\\

\midrule
 & \FedAvg
 & 86.26$_{\pm 0.03}$ 
 & 15.20$_{\pm 0.07}$ 
 & 50.48$_{\pm 0.29}$ 
 & 56.87$_{\pm0.36}$ 
 & 62.78$_{\pm 0.16}$
 & 100.0$_{\pm0.00}$\\
 
\multirow{5}{*}{\rotatebox[origin=c]{90}{CINIC-10}} 
& \qFFL 
& \bf86.63$_{\pm0.06}$ 
& \bf14.88$_{\pm0.08}$ 
& 51.57$_{\pm0.82}$ 
& 57.77$_{\pm0.36}$
& \bf 63.62$_{\pm 0.18}$
& \bf   100.0$_{\pm0.01}$\\

& \AFL 
& 86.45$_{\pm0.12}$ 
& 15.10$_{\pm0.11}$ 
& 51.57$_{\pm0.45}$ 
& 57.58$_{\pm0.29}$
& 63.04$_{\pm 0.28}$
& 100.0$_{\pm0.00}$
\\

& \GiFair&  
86.28$_{\pm0.11}$ & 
15.20$_{\pm0.13}$ & 
49.66$_{\pm1.21}$ &
56.97$_{\pm0.29}$ &
62.74$_{\pm 0.36}$ &
100.0$_{\pm0.00}$
\\

& \TERM & 
86.34$_{\pm0.04}$ & 
15.12$_{\pm0.01}$ &
49.90$_{\pm0.42}$ &  
57.21$_{\pm0.11}$ &
62.98$_{\pm 0.04}$ &
100.0$_{\pm0.00}$
\\
& \PropFair & 
86.01$_{\pm0.17}$ & 
15.34$_{\pm0.19}$ &
49.97$_{\pm1.23}$ &
56.53$_{\pm0.65}$ &
62.27$_{\pm 0.55}$ &
99.99$_{\pm0.01}$
\\

& $\Delta$-\FL &  
86.11$_{\pm 0.38}$ & 
15.11$_{\pm 0.22}$ & 
50.12$_{\pm 0.62}$ & 
57.10$_{\pm 0.76}$ &
62.45$_{\pm 0.85}$ &
100.0$_{\pm 0.01}$
\\

\cmidrule{2-8} 
& \VRed & 
85.79$_{\pm0.35}$ & 
15.02$_{\pm0.06}$ &
51.57$_{\pm0.50}$ &
57.66$_{\pm0.30}$ &
62.75$_{\pm 0.36}$ &
99.98$_{\pm0.01}$\\

& \SemiVRed & 
85.83$_{\pm0.33}$ & 
14.95$_{\pm0.07}$ &
\bf51.59$_{\pm0.98}$ &
\bf58.00$_{\pm0.21}$ &
62.70$_{\pm 0.14}$ &
99.96$_{\pm0.01}$ 
\\

\midrule
 & \FedAvg& 
 40.34$_{\pm0.06}$ & 
 6.98$_{\pm0.03}$ &
 25.64$_{\pm0.11}$ & 
 27.12$_{\pm0.06}$ & 
 30.35$_{\pm 0.03}$ &
 49.70$_{\pm0.07}$\\

\multirow{5}{*}{\rotatebox[origin=c]{90}{StackOverflow}}
& \qFFL & 
37.79$_{\pm0.80}$ & 
7.38$_{\pm0.09}$ &
22.54$_{\pm1.03}$ & 
24.12$_{\pm1.00}$ &
27.14$_{\pm 0.92}$ &
47.06$_{\pm0.66}$  
\\
& \AFL & 
- &
- &
- &
- &
- &
-
\\
& \TERM & 
40.34$_{\pm0.05}$ & 
6.96$_{\pm0.06}$ &
25.56$_{\pm0.21}$ & 
27.12$_{\pm0.20}$ &
30.41$_{\pm 0.12}$ &
49.76$_{\pm0.10}$
\\
& \GiFair&  
40.34$_{\pm0.04}$ & 
6.97$_{\pm0.03}$ & 
25.71$_{\pm0.13}$ &
27.10$_{\pm0.11}$ &
30.34$_{\pm 0.08}$ &
49.71$_{\pm0.09}$
\\

& \PropFair & 
41.76$_{\pm0.01}$ &
6.80$_{\pm0.05}$ &
27.30$_{\pm0.21}$ &
28.75$_{\pm0.19}$ &
32.14$_{\pm 0.10}$ &
50.76$_{\pm0.08}$
\\

& $\Delta$-\FL &  
39.94$_{\pm 0.11}$ & 
6.88$_{\pm 0.05}$ & 
25.36$_{\pm 0.08}$ & 
26.94$_{\pm 0.07}$ &
29.95$_{\pm 0.06}$ &
49.08$_{\pm 0.04}$
\\

\cmidrule{2-8} 
& \VRed &  
42.90$_{\pm0.05}$ &
6.64$_{\pm0.01}$ & 
29.08$_{\pm0.09}$ &
\bf30.39$_{\pm0.05}$ &
33.55$_{\pm 0.05}$ &
51.66$_{\pm0.03}$
\\
& \SemiVRed &  
\bf42.90$_{\pm0.03}$ &
\bf6.60$_{\pm0.01}$ & 
\bf29.10$_{\pm0.06}$ &
30.34$_{\pm0.09}$ & 
\bf 35.55$_{\pm 0.05}$ &
\bf51.70$_{\pm0.04}$
\\
\bottomrule
\end{tabular}
\end{table*}

\pgfplotstableread[row sep=\\, col sep=&]
{alg & worst10 \\
\rotatebox[origin=c]{90}{FedAvg} & 23.77 \\
\rotatebox[origin=c]{90}{$q$-FFL} & 25.95 \\
\rotatebox[origin=c]{90}{AFL} & 0 \\
\rotatebox[origin=c]{90}{GiFair} & 26.52 \\
\rotatebox[origin=c]{90}{TERM} & 29.34 \\
\rotatebox[origin=c]{90}{PropFair} & 16.66 \\
\rotatebox[origin=c]{90}{$\Delta$-FL} & 21.31\\
\rotatebox[origin=c]{90}{VRed} & 27.46 \\
\rotatebox[origin=c]{90}{Semi-VRed} & 30.34 \\
}\cifartenworsttendata

\pgfplotstableread[row sep=\\, col sep=&]
{alg & worst10 \\
\rotatebox[origin=c]{90}{FedAvg} & 12.49 \\
\rotatebox[origin=c]{90}{$q$-FFL} & 12.52 \\
\rotatebox[origin=c]{90}{AFL} & 12.72 \\
\rotatebox[origin=c]{90}{GiFair} & 12.59 \\
\rotatebox[origin=c]{90}{TERM} & 11.04 \\
\rotatebox[origin=c]{90}{PropFair} & 8.06 \\
\rotatebox[origin=c]{90}{$\Delta$-FL} & 11.28\\
\rotatebox[origin=c]{90}{VRed} & 12.81 \\
\rotatebox[origin=c]{90}{Semi-VRed} & 13.46 \\
}\cifarhundredworsttendata

\pgfplotstableread[row sep=\\, col sep=&]
{alg & worst10 \\
\rotatebox[origin=c]{90}{FedAvg} & 62.78 \\
\rotatebox[origin=c]{90}{$q$-FFL} & 63.62 \\
\rotatebox[origin=c]{90}{AFL} & 63.04 \\
\rotatebox[origin=c]{90}{GiFair} & 62.74 \\
\rotatebox[origin=c]{90}{TERM} & 62.98 \\
\rotatebox[origin=c]{90}{PropFair} & 62.27 \\
\rotatebox[origin=c]{90}{$\Delta$-FL} & 62.45\\
\rotatebox[origin=c]{90}{VRed} & 62.75 \\
\rotatebox[origin=c]{90}{Semi-VRed} & 62.70 \\
}\cinictenworsttendata

\pgfplotstableread[row sep=\\, col sep=&]
{alg & worst10 \\
\rotatebox[origin=c]{90}{FedAvg} & 30.35 \\
\rotatebox[origin=c]{90}{$q$-FFL} & 27.14 \\
\rotatebox[origin=c]{90}{AFL} & 0 \\
\rotatebox[origin=c]{90}{GiFair} & 30.41 \\
\rotatebox[origin=c]{90}{TERM} & 30.34 \\
\rotatebox[origin=c]{90}{PropFair} & 32.14 \\
\rotatebox[origin=c]{90}{$\Delta$-FL} & 29.95\\
\rotatebox[origin=c]{90}{VRed} & 33.55 \\
\rotatebox[origin=c]{90}{Semi-VRed} & 35.55 \\
}\stackoverflowworsttendata

\begin{figure*}[h]

\subfigure{
\begin{tikzpicture}
\begin{axis}
[title= CIFAR-10,
width  = 0.49\textwidth,
height = 5cm,
draw=red,
bar width=7pt,
ymajorgrids = true,
ylabel={test accuracy},
legend style={
at={(0.985,0.88)},
anchor=south east,
column sep=1ex
},
ymin=10, 
ymax=35,
ybar,
symbolic x coords = {\rotatebox[origin=c]{90}{FedAvg}, \rotatebox[origin=c]{90}{$q$-FFL}, \rotatebox[origin=c]{90}{AFL}, \rotatebox[origin=c]{90}{GiFair}, \rotatebox[origin=c]{90}{TERM}, \rotatebox[origin=c]{90}{PropFair}, \rotatebox[origin=c]{90}{$\Delta$-FL}, \rotatebox[origin=c]{90}{VRed}, \rotatebox[origin=c]{90}{Semi-VRed}},
xtick=data,
]


\addplot [red!10!black,fill=red!30!white] table[x=alg, y=worst10]{\cifartenworsttendata};
\addplot[red,sharp plot,dashed]
coordinates {(\rotatebox[origin=c]{90}{FedAvg},30.34) (\rotatebox[origin=c]{90}{Semi-VRed},30.34)};
\legend{worst 20\% test accuracy}
\end{axis}

\end{tikzpicture}
}
\hfill
\subfigure{
\begin{tikzpicture}
\begin{axis}
[title= CIFAR-100,
width  = 0.49\textwidth,
height = 5cm,
bar width=7pt,
ymajorgrids = true,
legend style={
at={(0.985,0.95)},
anchor=south east,
column sep=1ex
},
ybar,
symbolic x coords = {\rotatebox[origin=c]{90}{FedAvg}, \rotatebox[origin=c]{90}{$q$-FFL}, \rotatebox[origin=c]{90}{AFL}, \rotatebox[origin=c]{90}{GiFair}, \rotatebox[origin=c]{90}{TERM}, \rotatebox[origin=c]{90}{PropFair}, \rotatebox[origin=c]{90}{$\Delta$-FL}, \rotatebox[origin=c]{90}{VRed}, \rotatebox[origin=c]{90}{Semi-VRed}},
xtick=data,
]

\addplot [red!10!black,fill=red!30!white]table[red, x=alg, y=worst10]{\cifarhundredworsttendata};
\addplot[red,sharp plot,dashed]
coordinates {(\rotatebox[origin=c]{90}{FedAvg},13.46) (\rotatebox[origin=c]{90}{Semi-VRed},13.46)};
\end{axis}
\end{tikzpicture}
}

\subfigure{
\begin{tikzpicture}
\begin{axis}
[title= CINIC-10,
width  = 0.49\textwidth,
height = 5cm,
bar width=7pt,
ymajorgrids = true,
ylabel={test accuracy},
ymin=55, 
ybar,
symbolic x coords = {\rotatebox[origin=c]{90}{FedAvg}, \rotatebox[origin=c]{90}{$q$-FFL}, \rotatebox[origin=c]{90}{AFL}, \rotatebox[origin=c]{90}{GiFair}, \rotatebox[origin=c]{90}{TERM}, \rotatebox[origin=c]{90}{PropFair}, \rotatebox[origin=c]{90}{$\Delta$-FL}, \rotatebox[origin=c]{90}{VRed}, \rotatebox[origin=c]{90}{Semi-VRed}},
xtick=data,
]

\addplot [red!10!black,fill=red!30!white]table[x=alg,y=worst10]{\cinictenworsttendata};
\addplot[red,sharp plot,dashed]
coordinates {(\rotatebox[origin=c]{90}{FedAvg},63.62) (\rotatebox[origin=c]{90}{Semi-VRed},63.62)};
\end{axis}
\end{tikzpicture}
}
\hfill
\subfigure{
\begin{tikzpicture}
\begin{axis}
[title= StackOverflow,
width  = 0.49\textwidth,
height = 5cm,
bar width=7pt,
ymajorgrids = true,
legend style={
at={(0.985,0.78)},
anchor=south east,
column sep=1ex
},
ymin=20, 
ybar,
symbolic x coords = {\rotatebox[origin=c]{90}{FedAvg}, \rotatebox[origin=c]{90}{$q$-FFL}, \rotatebox[origin=c]{90}{AFL}, \rotatebox[origin=c]{90}{GiFair}, \rotatebox[origin=c]{90}{TERM}, \rotatebox[origin=c]{90}{PropFair}, \rotatebox[origin=c]{90}{$\Delta$-FL}, \rotatebox[origin=c]{90}{VRed}, \rotatebox[origin=c]{90}{Semi-VRed}},
xtick=data,
]

\addplot [red!10!black,fill=red!30!white]table[x=alg, y=worst10]{\stackoverflowworsttendata};
\addplot[red,sharp plot,dashed]
coordinates {(\rotatebox[origin=c]{90}{FedAvg},35.55) (\rotatebox[origin=c]{90}{Semi-VRed},35.55)};
\end{axis}
\end{tikzpicture}
}

\caption{Worst 20\% test accuracies for different algorithms. \textbf{top left:} CIFAR-10, \textbf{top right:} CIFAR-100, \textbf{bottom left:} CINIC-10, \textbf{bottom right:} StackOverflow. Due to divergence, results for \AFL on CIFAR-10 and StackOverFlow are not shown. All subfigures share the same legends and axis labels.}
\label{fig:worst20accs}
\end{figure*}
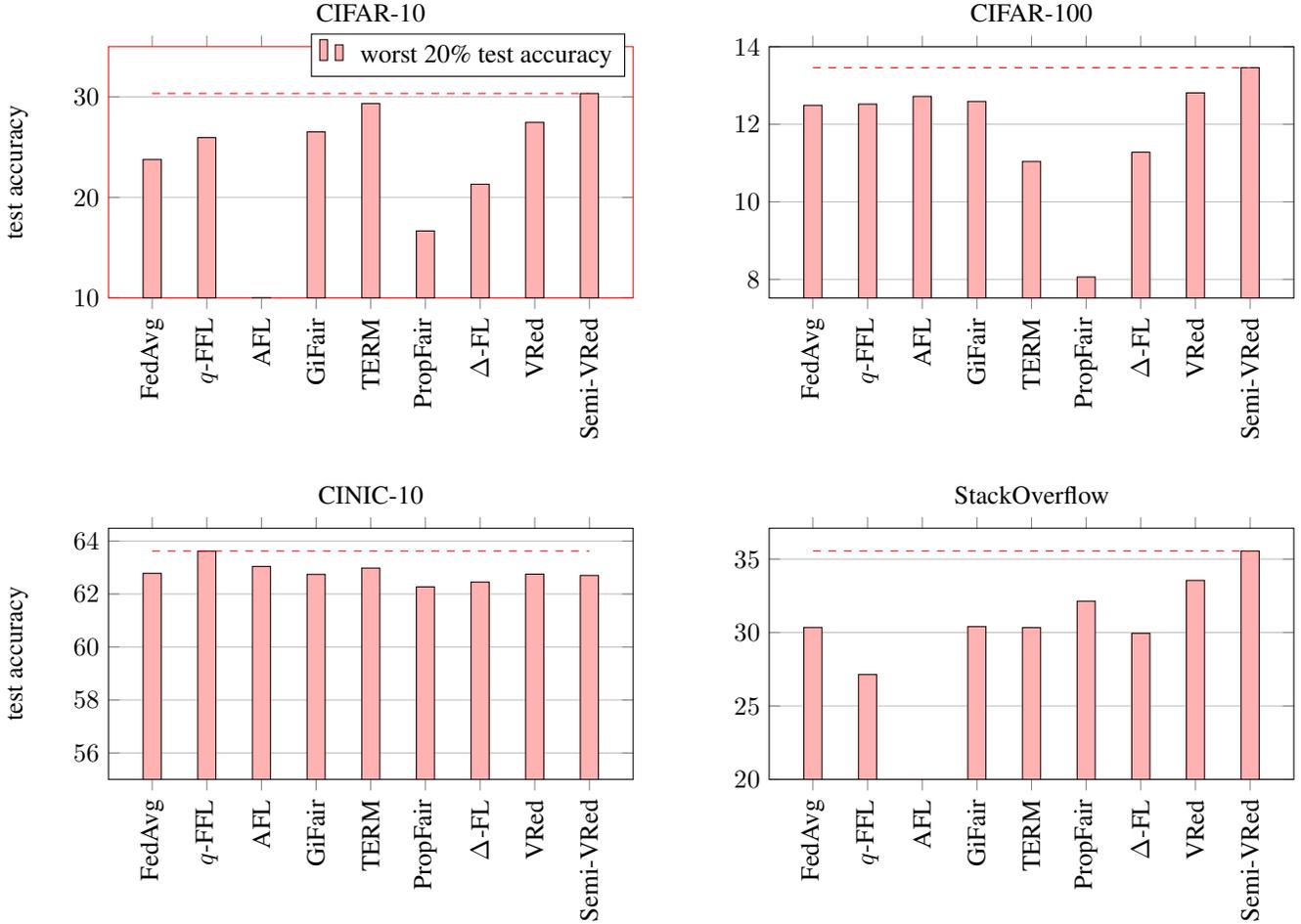

\subsection{Relation between \VRed and Robust Optimization}\label{vr_and_dro}
Empirical optimization is usually used as a data-driven approach for tuning models for decision making, where an expected loss is minimized based on some available train data. The trained model is then used for prediction tasks on some test data. However, if the empirical distribution of the train data is substantially different from that of test data, our confidence for doing prediction on the test data with the trained model diminishes. Robust empirical optimization has been used to address this problem \cite{Bertsimas2018RobustSA,Bertsimas2018DatadrivenRO,BenTal2013RobustSO}. The work in \cite{Gotoh2018RobustEO} formulated a distributionally robust optimization (DRO) problem based on a minimax problem, where a model is trained against the worst distribution shifts between the train and test data:

\begin{align}\label{eq:DRO}
    \min_{\thetav} \max_{\mathbb{Q}} \bigg\{\mathbb{E}_{(x,y)\sim\mathbb{Q}} [\ell(h(x,\thetav), y)]-\frac{1}{\delta} \mathcal{H}_{\phi}(\mathbb{Q}|\hat{\mathbb{P}}_n)\bigg\},
\end{align}

where constant $\delta > 0$ is the regularization constant, and $\hat{\mathbb{P}}_n$ and ${\mathbb{Q}}$ are the train data empirical distribution and the test data distribution. The above problem optimizes against the ``worst-case" test distribution $\mathbb{Q}$, which does not diverge too much from $\hat{\mathbb{P}}_n$: the divergence of the distribution $\mathbb{Q}$ from $\hat{\mathbb{P}}_n$ is penalized in the regularization term $\frac{1}{\delta} \mathcal{H}_{\phi}(\mathbb{Q}|\hat{\mathbb{P}}_n)$, where $\mathcal{H}_{\phi}$ is a divergence measure between two distributions. The solution to this optimization problem is a model which is robust against distribution shifts between the train and test data, and its robustness increases with $\delta$. It was shown in \cite{Gotoh2018RobustEO} that the above DRO problem is equivalent to a mean-variance problem, where the empirical average loss is regularized with sample variance of the loss on the empirical train distribution $\hat{\mathbb{P}}_n$:

\begin{align}\label{eq:DRO_equivalence}
    &\min_{\thetav} \max_{\mathbb{Q}} \bigg\{\mathbb{E}_{(x,y)\sim\mathbb{Q}} [\ell(h(x,\thetav), y)]-\frac{1}{\delta} \mathcal{H}_{\phi}(\mathbb{Q}|\hat{\mathbb{P}}_n)\bigg\} \equiv\nonumber \\ 
    & \min_{\thetav} \bigg\{\mathbb{E}_{(x,y)\sim\hat{\mathbb{P}}_n} \bigg[\ell(h(x,\thetav), y)\bigg] + \frac{\delta}{2\phi''(1)}~ \mathbb{E}_{(x,y)\sim\hat{\mathbb{P}}_n} \bigg[\ell(h(x,\thetav), y) - \mathbb{E}_{(x,y)\sim\hat{\mathbb{P}}_n} [\ell(h(x,\thetav), y)]\bigg]^2\bigg\}.
\end{align}

This means that variance regularization can improve out-of-sample (test) performance. \cite{Maurer2009,Namkoong2017} proposed regularizing the empirical risk minimization (ERM) by the empirical variance of losses across training samples to balance bias and variance and improve out-of-sample (test) performance and convergence rate. Similarly, \cite{shivaswamy2010a} proposed boosting binary classifiers based on a variance penalty applied to exponential loss. 

DRO is also an effective approach to deal with imbalanced and non-iid data. Unlike the above sample-wise variance regularization works, the work \cite{Krueger2021OutofDistributionGV} - assuming having access to data from multiple training domains - proposed penalizing variance of training risks across the domains as a method of distributionally robust optimization to provide out-of-distribution (domain) generalization. The first work propopsing DRO in \FL setting is \cite{MohriSS19}, where they minimize the maximum combination of clients' local losses to address fairness in FL: $\min_{\thetav} \max_{i} f_i(\thetav)$. Also, the work in \cite{Deng2020DistributionallyRF} proposed a communication-efficient algorithm which performs well over the worst-case combination of clients' empirical local distributions:

\begin{align}\label{eq:DRFA_DRO}
    \min_{\thetav} \max_{\boldsymbol{\lambda}\in \Lambda} \sum_{i=1}^n \lambda_i f_i(\thetav),
\end{align}

where $\boldsymbol{\lambda}\in\Lambda=\{\lambda \in \mathbb{R}_+^n: \sum_{i=1}^n \lambda_i=1\}$. 
The relation between robust optimization and variance regularization in non-\FL settings (\cref{eq:DRO_equivalence}) encourages us to interpret \VRed as an equivalent form of DRO. Hence, the variance regularization connects \VRed non-trivially to the previous works \AFL \cite{MohriSS19} and \texttt{DRFA} \cite{Deng2020DistributionallyRF} through DRO.

\end{document}